\documentclass[twoside]{article}

\usepackage[preprint]{aistats2027}
\usepackage[round]{natbib}

\usepackage{xcolor}
\usepackage[hidelinks]{hyperref}
\hypersetup{
    colorlinks,
    linkcolor={blue!50!black},
    citecolor={blue!50!black},
    urlcolor={blue!50!black}
}

\usepackage{url}
\usepackage{subcaption, booktabs, multicol, multirow}
\usepackage[colorinlistoftodos,prependcaption,textsize=tiny]{todonotes}

\usepackage{amsmath, amssymb, amsthm, bbm, mathtools}
\usepackage{algorithm, algpseudocode}

\def\P{{\mathbb{P} }}
\def\E{{\mathbb{E} }}
\def\MLE{{\mathrm{MLE}}}

\def\T{\mathrm{\scriptscriptstyle{T}}}
\DeclareMathOperator*{\argmax}{arg\,max}

\newcommand{\norm}[1]{\Vert#1\Vert}
\theoremstyle{plain}
\newtheorem{theorem}{Theorem}

\newtheorem{remark}{Remark}
\newtheorem{corollary}{Corollary}

\newtheorem{proposition}{Proposition}

\begin{document}

%
\runningtitle{List ordered graphical models}

%

\twocolumn[

\aistatstitle{Covariate-dependent Joint Modeling of Multivariate Ordinal Preferences and Its Connections with Comparison Models}

\aistatsauthor{ Yujie Chen \And Antik Chakraborty \And  Anindya Bhadra }

\aistatsaddress{Department of Statistics \\Purdue University\\ chen1866@purdue.edu \And Department of Statistics \\Purdue University\\ antik015@purdue.edu \And Department of Statistics \\Purdue University\\ bhadra@purdue.edu} ]

\begin{abstract}

Multivariate ordinal data along with covariates are commonly collected in problems ranging from alignment of language models with human preferences, as well as in recommender systems. For example, data sets such as MovieLens  contain several movies rated on a scale 1--5 by human users, along with their demographic information such as age or gender. Similarly, data sets such as HelpSteer  collect human feedback on several attributes such as \emph{helpfulness} or \emph{verbosity} of LLM response on an ordinal scale, with covariates depending on the LLM prompt--response pairs. Unfortunately, the standard approaches for modeling these data (a) look at the attributes individually rather than jointly, and (b) often convert the data into pairwise or list-wise win--loss comparisons for fitting models such as Bradley--Terry and Plackett--Luce. Both of these lead to a coarsening of what is actually observed, which we address via a joint covariate-dependent consecutive ratio Markov random field model. We also show pairwise or listwise comparison models are obtained under  restrictions of our joint model, and that joint modeling improves comparisons. We also develop a maximum likelihood inference procedure even in the presence of an intractable normalizer. 
\end{abstract}

\section{\uppercase{Introduction}}

Multivariate ordinal data are routinely collected in many fields, 
e.g., in alignment of language models with human preferences, or in recommender systems. Examples include  human annotation data in large language model (LLM) outputs on attributes such as  \emph{helpfulness} or \emph{verbosity} as in HelpSteer  \citep{wang2024helpsteer} and movie ratings data in MovieLens \citep{harper2015movielens}, along with demographic covariates. The purpose of this article is to simultaneously address two common shortcomings of the approaches for modeling such data: (a) a joint model instead of separate models of each ordinal response and (b) a model for the ordinal responses themselves, instead of a reduced form derived summary such as pairwise or list-wise win--loss metrics. Throughout, our proposed approach incorporates available covariate information.

The classical models used to analyze multivariate ordinal data are essentially extensions of univariate ordinal regression models. Broadly, the univariate models are of two categories: odds ratio based models and latent variable models \citep{agresti2010analysis}. Their multivariate extensions also follow similar strategies, in that primarily pairwise and marginal odds ratios are parameterized \citep{bartolucci2007extended}. Alternative methods consider extensions of the latent variable approach to multivariate settings \citep{hirk2019multivariate}, but face difficulties in scaling to high dimensions, and may not lend themselves to straightforward interpretations. 

Other approaches focus not on the ordinal data per se, but their reduced form summaries as given by pairwise or list-wise win--loss rankings.
Popular models include the Bradley--Terry (BT) model for pairwise  \citep{bradley1952rank} and the Plackett--Luce (PL) model for list-wise comparisons \citep{plackett1975analysis, luce1959individual}, sometimes with provisions for handling ties \citep{davidson1970extending}. These methods apply directly to comparison outcomes, often ignoring the underlying ordinal scores that generated these preferences \citep{aoki2024universe}. This may result in potential loss of information, e.g. consider the case where item $A$ receives a score of $0$, $B$ receives $5$ while $C$ receives a score of $4$ in an ordinal scale of $0$--$5$. Then, pairwise comparisons between $(A, B)$ and $(B,C)$ yield the same result that $B$ wins against both $A$ and $C$, ignoring the fact that B--C are much closer than A--B. List-wise comparisons lead to similar loss of information. Further, in data sets such as HelpSteer, it is reasonable to assume ordinal attributes are inherently globally coupled. For example, a human user may rate an otherwise accurate LLM response \emph{unhelpful} if it is perceived overly \emph{verbose}. Separate modeling of the ordinal attributes ignores this coupling.

In the current work we address both of these shortcomings by directly modeling the multivariate ordinal scores in a covariate-dependent manner using a consecutive ratio Markov random field (CR--MRF) model. We show that the univariate ordinal models, as well as pairwise or list-wise comparison models are obtained under restrictions of our model, before proceeding to develop inferential strategies.

\subsection{Related Works}

Undirected graphical models \citep{koller2009probabilistic} or Markov random fields (MRF) provide a natural way to describe dependencies between multiple random variables through conditional independence relationships. They have been successfully used in inference of cellular networks \citep{friedman2004inferring}, spatial statistics \citep{besag1974spatial}, natural language processing \citep{lafferty2001conditional}, among others. Building on this line of work, \cite{suggala2017ordinal} consider extensions of cumulative/continuation/consecutive odds ratio models \citep{mccullagh1989generalized} for univariate ordinal data to the multivariate setting. Since undirected graphical models specify a joint distribution over the set of random variables through simpler node-conditional distributions, \cite{suggala2017ordinal} show that a node-conditional consecutive odds ratio model leads to a valid joint distribution over the variables. In this article, we adopt this MRF approach to address cases where the conditional dependencies between variables may potentially change due to covariate effects. Moreover, while \cite{suggala2017ordinal} employ a pseudo-likelihood approach for inference on model parameters thus avoiding the intractable normalizing constant, we develop a full-likelihood based analysis using the importance sampling estimator of the intractable normalizing constant developed in \cite{chen2024likelihood, chen2026exact}. Further, and most importantly, we show that joint modeling is useful even when the primary interest lies in pairwise or list-wise comparisons, by comparing against BT and PL models, which \cite{suggala2017ordinal} do not address.

Several works \citep{obrien2004bayesian, Bhat2010Comparison, kenne2016pairwise, hirk2019multivariate, vana2024multivariate} have addressed the problem of multivariate ordinal regression from a latent variable approach. Specifically, their models assume a latent multivariate Gaussian/logistic random variable such that its discretization on appropriate thresholds leads to the observed ordinal outcomes. For inference, they primarily employ the composite likelihood approach. As with any latent variable model of this type, although specification of the dependence structure is straightforward in the latent space, how it translates to dependence between the observed outcomes is relatively opaque. 
\subsection{Our Contributions}
\begin{enumerate}
    \item Building on the consecutive-ratio ordinal graphical model (CR--MRF) studied by \cite{suggala2017ordinal}, we develop in Section~\ref{sec:crmrf} a \emph{covariate-dependent} ordinal graphical model where node and edge parameters are functions of covariates. We also show how to perform statistically efficient \emph{maximum  likelihood} inference in this model, while \cite{suggala2017ordinal} consider \emph{pseudolikelihood} inference to bypass the intractable normalizer.
    \item We explicitly demonstrate in Section~\ref{sec:conn} that comparison models such as BT and PL can be recovered from the CR--MRF under specific restrictions, which clarifies the coarsening of the observations performed by BT or PL. To our knowledge, this connection has not been noted either in the MRF or in the comparison models literature.
    \item The simple node-conditional distributions of the covariate-dependent CR--MRF allow for straightforward extension of the methodology for handling missing data; and we provide a brief overview in Supplementary Section \ref{sec:missing_data}. In contrast, the comparison models require a strongly connected graph over the items to proceed with maximum likelihood estimation.
    \item Through numerical and data experiments using \emph{MovieLens}, we demonstrate that a joint model across the items improves predictions of pairwise or list-wise comparisons. 
\end{enumerate}
\section{\uppercase{Background on Ordinal Comparison Models}}\label{sec:background}
We begin by providing a brief overview of the Bradley--Terry (BT) and Plackett--Luce (PL) models. Consider a list of $p$ items indexed by $ [p] \coloneqq  \{1, \ldots, p\}$. When a pair $(j,k) \subset [p]$ is compared with scores $Y_j, Y_k$, the classical BT model \citep{bradley1952rank} posits:
\begin{equation}\label{eq:BT_model}
    \P[Y_j < Y_k] = \dfrac{\gamma_j}{\gamma_j + \gamma_k},
\end{equation}
where $\gamma_j>0$ is interpreted as the strength associated with item $j$ and it is assumed that lower scores are better.
In another parameterization, $u_j \coloneqq  \log \gamma_j$ can also be thought of as the utility of item $j$. In the original formulation of the BT model, the scores $Y_j, Y_k$ are generally unobserved but the comparison outcome is observed.

The PL model \citep{plackett1975analysis, luce1959individual} generalizes the BT model to multiple comparisons. Suppose a subset of items $ S = \{i_1, \ldots, i_k\} \subset [p]$ where $2 \leq k \leq p$ is compared with scores $\{Y_{i_1}, \ldots, Y_{i_k}\}$ and the outcome is a preference of the items $Y_S \in \{i_{\pi(1)}, \ldots, i_{\pi(k)}: \pi \in S_k\}$ with $S_k$ being the symmetric group on $[k]$. Here $i_{\pi(j)}$ is the item in $S$ ranked $j$. The Plackett-Luce model posits:
\begin{equation}\label{eq:PL_model}
    \P(Y_{i_{\pi(1)}} < Y_{i_{\pi(2)}} < \cdots <Y_{i_{\pi(k)}}) = \prod_{r = 1}^{k-1} \dfrac{\gamma_{i_\pi(r)}}{\sum_{l = r}^k \gamma_{i_\pi(l)}}.
\end{equation}
There have been extensions to the basic BT and PL models to accommodate ties and weak rankings, respectively. For BT, \cite{davidson1970extending} proposed the following Bradley--Terry--Davidson (BTD) modification:
\begin{align}\label{eq:BTD_model}
    \P[Y_j < Y_k] &= \dfrac{\gamma_j}{\gamma_j + \gamma_k + \nu \sqrt{\gamma_j \gamma_k}} \nonumber, \\
    \P[Y_j = Y_k] &= \dfrac{\nu\sqrt{\gamma_j\gamma_k}}{\gamma_j + \gamma_k + \nu \sqrt{\gamma_j \gamma_k}}
\end{align}
where $\nu\geq 0$ controls the tie propensity. Similarly, for PL, letting $A$ denote the subset of items yet to be ranked, and considering items $S \subseteq A$ to be ranked identically, leads to probability:
\begin{equation}\label{eq:DL_model}
    \P(S \mid A) =  \dfrac{\delta_{|S|}\left(\prod_{j \in S}\gamma_j\right)^{1/|S|}}{\sum_{\varnothing \neq B \subseteq A}\delta_{|B|}\left(\prod_{j \in B}\gamma_j\right)^{1/|B|} }.
\end{equation}
Here for any $A$, $|A|$ denotes its cardinality and $\delta_{|A|}$ is the propensity of $|A|$ items to be ranked the same. In other words, $\P(S \mid A) \propto w_{PL}(S)$ where $w_{PL}(S)$ is the stage weight of the set $S$ and $w_{PL}$ has the expression in the numerator of \eqref{eq:DL_model}. 
\section{\uppercase{The Intractable CR--MRF model and ML Inference}}\label{sec:crmrf}
\subsection{The Homogeneous CR--MRF}
\label{sec:ord-homog}
Let $Y = (Y_{1}, \ldots, Y_{p})^{\top} \in \mathcal{Y} = \{0, 1, \ldots, M\}^{p} $ denote a vector of $p$ ordinal variables, each taking values on a $(M{+}1)$-level ordered scale. The consecutive-ratio Markov random field (CR--MRF) introduced by \cite{suggala2017ordinal} has the probability density function:
\small
\begin{align}
\label{eq:crmrf-joint}
  p(y \mid \Theta) 
  &\propto \exp\left(\sum_{j=1}^p \sum_{m = 1}^M \theta_{j,m}^{\mathrm{node}} \phi_{j,m}(y_j) 
   +  \sum_{j<k}^p \theta_{jk}^{\mathrm{edge}}  \phi_{jk} (y_j, y_k)\right),
\end{align}
\normalsize
where the node-sufficient statistic is $\phi_{j,m}(y_j) = \mathbf{1}(y_j\leq m-1)$ and edge-sufficient statistic is $\phi_{jk}(y_j, y_k) = (M- y_j)(M - y_k)/M^2$. The model is parametrized by $\Theta = ({\Theta^{\mathrm{node}}}, {\Theta^{\mathrm{edge}}})$ where $\Theta^{\mathrm{node}} = (\theta^{\mathrm{node}}_{j,m}) \in \mathbb{R}^{p \times M}$ and $\Theta^{\mathrm{edge}} = (\theta^{\mathrm{edge}}_{jk}) \in \mathbb{R}^{p \times p}$. The edge parameter matrix $\Theta^{\mathrm{edge}} $ is symmetric
with zero diagonal. Thus, node strengths are allowed to change between levels of the ordinal variable, but the dependency between the variables is fixed over all levels of the variable. 
The analytically intractable normalizing constant of the model is  $Z(\Theta) = \sum_{y} q_{\Theta}(y)$, where $q_{\Theta}(y)$ is the unnormalized density in \eqref{eq:crmrf-joint}. Since the sample space of the node and edge-specific sufficient statistics are finite, $Z(\Theta)$ is finite for any real $\Theta$. 

The main feature of \eqref{eq:crmrf-joint} is that it extends the univariate consecutive ratio model where the main model assumption is that $\log [\P(Y_j = m-1)/\P(Y_j = m)] = \theta_{j,m}$ for $m = 1, \ldots, M$. For \eqref{eq:crmrf-joint}, a similar expression is available for each conditional consecutive ratio, i.e. $\log [\P(Y_j = m-1\mid Y_{-j})/\P(Y_j = m \mid Y_{-j})]$. We refer to Supplementary Section \ref{sec:crmrf-extra} for detailed expressions. 
\begin{remark} [Bounded Sufficient Statistic] Our parameterization differs from  \citet{suggala2017ordinal} in normalizing the edge statistic $(M - y_{j})(M - y_{k})/M^{2}$ to lie in $[0, 1]$. For every pair, the edge statistic attains its maximum at $(y_{j}, y_{k}) = (0, 0)$ and its minimum at any pair containing the highest level $M$. We do this for numerical stability, noting that there is a one-to-one correspondence between the two formulations.
\end{remark}
     
\subsection{Covariate-dependent CR--MRF}
Suppose we observe covariates $X$ along with the multivariate ordinal variable $Y$. In order to study the effect of the covariates on $Y$, a regression model specifies a conditional distribution for $Y \mid X$. Given the homogeneous CR--MRF model introduced earlier, a natural way to incorporate this dependence is by assuming $Y\mid X \sim \text{CR--MRF}(\Theta(X))$. We model this as follows.

Suppose for each subject $i = 1, \ldots, n$, we observe covariate information $x_{i} = (x_{i,1}, \ldots, x_{i,q})^{\top} \in \mathbb{R}^{q}$. We model each node and edge parameter as a linear function of $x_{i}$,
\begin{align}
\label{eq:cov-coef-node}
  \theta_{j,m}(x_{i})
  \;=\;
  \theta_{j,m,0}
  \;+\;
  \sum_{v=1}^{q} \theta_{j,m, v}\, x_{i,v}, \nonumber \\
  \theta_{jk}(x_{i})
  \;=\;
  \theta_{jk,0}
  \;+\;
  \sum_{v=1}^{q} \theta_{jk,v}\, x_{i,v},  
\end{align}
for $j = 1,\ldots, p$ and $m = 1, \ldots M$ with the convention $\theta_{jk,v} = \theta_{kj,v}$ and $\theta_{jj,v} = 0$ for all $v$. Each $\theta_{j,m,0}$ and $\theta_{jk,0}$ is an intercept that captures the node or edge parameter at the reference covariate level $x_{i} = 0$, and each $\theta_{j,m,v}$ and $\theta_{jk,v}$ for $v = 1, \ldots, q$ is the slope of covariate $v$. 
 
The joint distribution of ordinal scores $Y_i$ for individual $i$ is $p(y_{i} \mid x_{i}, \Theta) = q_{\Theta(x_i)}(y_i)/Z(\Theta(x_i))$.  
The joint distribution across $n$ independent individuals factorizes as $p(y \mid x, \Theta) = \prod_{i=1}^{n} p(y_{i} \mid x_{i}, \Theta)$. The homogeneous CR--MRF \eqref{eq:crmrf-joint} is recovered as the special case in which all slopes vanish, i.e., $\theta_{j,m,v} = 0$ and $\theta_{jk,v} = 0$ for $v \ge 1$.

\subsection{The Tractable Independence Model}
\label{sec:ord-indep}
A key challenge with \eqref{eq:crmrf-joint}  is that $Z(\Theta)$ is intractable. The problem is exacerbated in a covariate-dependent setting, as one must now estimate $Z(\Theta(x_i))$ separately for $i=1,\ldots,n$. However, as noted by \citet{chen2024likelihood}, 
setting all edge parameters to zero in \eqref{eq:crmrf-joint} eliminates the cross-terms and yields a product distribution that is fully tractable. In the homogeneous case,
\begin{align}
\label{eq:indep-homog}
  p_{0}(y \mid \Theta^{\mathrm{node}})
  &\;=\;
  \prod_{j=1}^{p} p_{0}(y_{j} \mid \Theta^{\mathrm{node}}_{j\bullet}), \nonumber \\
  p_{0}(y_{j} = m \mid \Theta_{j\bullet}^{\mathrm{node}})&
  \;=\;
  \frac{\exp\!\big\{\sum_{r=m+1}^{M} \theta^{\mathrm{node}}_{j,r}\big\}}
       {\sum_{\ell=0}^{M} \exp\!\big\{\sum_{r=\ell+1}^{M}\theta^{\mathrm{node}}_{j,r}\big\}},
\end{align}
where $\Theta_{j\bullet}^{\mathrm{node}}$ is the $j$-th row of $\Theta^{\mathrm{node}}$. The distribution has closed-form normalizer:
\begin{equation}
\label{eq:Z0-homog}
  Z_{0}(\Theta^{\mathrm{node}})
  \;=\;
  \prod_{j=1}^{p}
    \sum_{\ell=0}^{M}
      \exp\!\left\{\sum_{r=\ell+1}^{M} \theta^{\mathrm{node}}_{j,r}\right\}.
\end{equation}
Sampling from $p_{0}$ is easy since it is a distribution of independent categorical variables and can be performed in batches. This is crucial to our inference procedure. 
 
In the covariate-dependent case, setting $\theta_{jk,v} = 0$ for all $(j, k, v)$ yields a similar factorization within each individual.
For $i = 1, \ldots, n$,
\begin{equation}
\label{eq:indep-cov}
  p_{0}(y_{i} \mid x_{i}, \Theta^{\mathrm{node}})
  \;=\;
  \prod_{j=1}^{p} p_{0}\!\left(y_{i,j} \mid \Theta_{j\bullet}^{\mathrm{node}}(x_{i})\right),
\end{equation}
where $\Theta_{j \bullet}^{\mathrm{node}}(x_{i}) = (\theta^{\mathrm{node}}_{j,1}(x_{i}), \ldots, \theta^{\mathrm{node}}_{j,M}(x_{i}))$  is the individual-specific node parameter vector with components defined in \eqref{eq:cov-coef-node} for the $j$-th ordinal variable. The batchable computation of the individual-specific normalizer:
\begin{equation}
\label{eq:Z0-cov}
  Z_{0}(\Theta^{\mathrm{node}}(x_{i}))
  \;=\;
  \prod_{j=1}^{p}
    \sum_{\ell=0}^{M}
      \exp\!\left\{
        \sum_{r=\ell+1}^{M} \theta_{j,r}^{\mathrm{node}}(x_{i})
      \right\},
\end{equation}
is again closed-form, and the joint normalizer across $n$ individuals factorizes as $\prod_{i=1}^{n} Z_{0}(\Theta^{\mathrm{node}}(x_{i}))$. 

\subsection{Maximum Likelihood Inference}\label{sec:mle}
We now proceed to describe  maximum likelihood inference for the proposed model, a departure from pseudolikelihood deployed by previous approaches \citep{suggala2017ordinal}. Maximum likelihood is known to be statistically efficient in large samples \citep{rao1945information}, and yet, the presence of an intractable normalizer but simple node-conditional distributions naturally pointed to pseudolikelihood prior to the current work. We provide conditions under which the MLE exists for the CR--MRF model in Section \ref{sec:crmrf-extra} of the supplement.

Hereafter, we shall assume that the MLE exists. Since later we show the distribution to be minimal (Section \ref{subsec:cr-coarsening-proof}), it is also unique due to the fact that the Hessian of the negative log-likelihood yields the covariance matrix of the sufficient statistics which is positive definite implying strict log-concavity of the log-likelihood. We now proceed to detail a (stochastic) gradient ascent algorithm to obtain the MLE. Here again, for simplicity, we work with the homogeneous case and then note the key differences in the covariate-dependent case. Consider observations $y_1, \ldots, y_n \overset{iid}{\sim} p(\cdot \mid \Theta)$ where $p(\cdot \mid \Theta)$ is given in \eqref{eq:crmrf-joint}, then the MLE is defined as: 
\begin{align}\label{eq:MLE}
   \hat{\Theta}_{\MLE}  = \argmax_{\Theta}  \left[\sum_{i=1}^n \log q_\Theta(y_i) - n \log Z(\Theta)\right].
\end{align}
Direct optimization of \eqref{eq:MLE} is not possible due to the intractable $Z(\Theta)$. We instead adopt the stochastic optimization scheme based on an importance sampling estimate of $\nabla Z(\Theta)$ and $Z(\Theta)$ of \cite{chen2024likelihood}. In particular, we use the following iterative steps:
\begin{equation}\label{eq:mle_updates}
    \Theta_{t+1} = \Theta_t + \gamma_t\left[\sum_{i=1}^n \nabla \log q_{\Theta_t}(y_i) - n T(\Theta_t)\right], 
\end{equation}
where $T(\Theta)$ is a ratio-estimator of $\nabla \log Z(\Theta) = \nabla Z(\Theta)/Z(\Theta)$ and $\gamma_t>0$ is a schedule of decreasing step-sizes. Formally, we define $T(\Theta) = T_1(\Theta)/T_2(\Theta)$ where $T_1(\Theta)$ and $T_2(\Theta)$ are unbiased and consistent Monte Carlo estimators of $\nabla Z(\Theta)$ and $Z(\Theta)$, respectively. These estimators are constructed using the identities \citep{geyer1992constrained} $\E_{V\sim p_\Phi}[q_\Theta(V)/q_\Phi(V)] = Z(\Theta)/Z(\Phi)$ and $\E_{V\sim p_\Phi}[\nabla q_\Theta(V)/q_\Phi(V)] = \nabla Z(\Theta)/Z(\Phi)$ for any $\Phi$ for which $Z(\Phi)<\infty$ (in this case $\Phi$ can be any real-valued matrix). The main feature of the Monte Carlo scheme of \cite{chen2024likelihood} is the use of $\Phi = (\Theta^{\mathrm{node}}, 0)$ for which one recovers the independence model of Section \ref{sec:ord-indep} with tractable normalizing constant. This allows batch sampling of the $p$ independent ordinal variables. Moreover, sampling these independent ordinal variables does not require expensive Markov chain Monte Carlo techniques, which would be the case for annealed importance sampling \citep{neal2001annealed} based Monte Carlo estimates of quantities involving the intractable $Z(\Theta)$. That the variance of the importance sampling estimate is well-behaved for this choice of $\Phi$ was shown in \cite[Proposition 3.2]{chen2024likelihood}. We also remark here that in implementing \eqref{eq:mle_updates}, the choice of $\Phi$ is also updated at each step, namely, we set $\Phi_t = (\Theta_t^{\mathrm{node}}, 0)$.

When $p$ is high, it is natural to assume that $\Theta^{\mathrm{edge}}$ is sparse. In such cases, we add an $\ell_1$-penalty to \eqref{eq:MLE} with penalty parameter $\lambda>0$ and iterate:
\begin{align}\label{eq:pmle_updates}
    \tilde{\Theta}_{t+1} &= \Theta_t + \gamma_t \left[ \sum_{i=1}^n \nabla \log q_{\Theta_t}(y_i) - n T_{N_t}(\Theta_t)\right], \nonumber\\
    \Theta_{t+1}&= \left(\tilde{\Theta}_{t+1}^{\mathrm{node}}, \text{prox}_{\lambda\gamma_t}^{\norm{\cdot}_1}(\tilde{\Theta}^{\mathrm{edge}}_{t+1})\right),
\end{align}
where $\text{prox}_{\lambda \gamma_t}^{\norm{\cdot}_1}$ is the proximal operator of the $\ell_1$-norm which is the soft-thresholding function. Here, we intentionally index $T(\Theta)$ by the number of importance samples used to construct it. Also, note the dependence of $N$ on the iteration number which is important for convergence of the sequence.  The modified updates involve the soft-thresholding operator applied on \eqref{eq:mle_updates}.  We provide the following consistency result, with a proof in Supplementary Section~\ref{sec:supproofs}.
\begin{theorem}\label{thm:mle_convergence}
    Assume there exists a unique maximizer $\Theta^\star$ of the $\ell_1$-penalized log-likelihood $\sum_{i=1}^n \log p_\Theta(y_i) - \lambda\sum_{j<k} |\Theta^{\mathrm{edge}}_{jk}|$, and there exists $c>0$ such that $Z(\Theta)\geq c$ for all $\Theta$. If $\Theta$ lives in a compact set and the step-sizes satisfy $\gamma_t \leq 1/\beta$ for some $\beta>0$ depending on the Lipschitz constant of $\nabla \log Z$, $\sum \gamma_t = \infty$, $\sum{\gamma_t^2} < \infty$, and the Monte Carlo sample size $N_t$ is chosen such that $\sum{\gamma_t}\norm{\text{vec}(b_{N_t}(\Theta_t))} < \infty$, then
    $$\E[\norm{\text{vec}(\Theta_t) - \text{vec}(\Theta^\star)}_2^2] \to 0,$$
    where $\text{vec}(b_{N_t}(\Theta_t))$ is the bias of the ratio-estimator $T_{N_t}(\Theta_t)$ with Monte Carlo sample size $N_t$.
\end{theorem}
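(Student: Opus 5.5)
We treat the iteration \eqref{eq:pmle_updates} as a stochastic proximal gradient method with biased gradient estimates, and run the usual Robbins--Siegmund style argument on the squared distance to $\Theta^\star$. Since $\ell_1$ penalization is written as a minimization problem, we work with the negative log-likelihood.

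\textbf{Setup.} Write
\[
f(\Theta) = -\frac1n\sum_{i=1}^n \log q_\Theta(y_i) + \log Z(\Theta), \qquad g(\Theta)=\frac{\lambda}{n}\sum_{j<k}|\theta^{\mathrm{edge}}_{jk}|.
\]
The step sizes are rescaled by $n$ to absorb the factor $n$ in \eqref{eq:pmle_updates}, with $\lambda$ rescaled accordingly. Then $\Theta^\star$ is the unique minimizer of $F=f+g$, and the update reads
\[
\Theta_{t+1}=\operatorname{prox}_{\gamma_t g}\bigl(\Theta_t-\gamma_t G_t\bigr), \qquad G_t=\nabla f(\Theta_t)+b_{N_t}(\Theta_t)+\xi_t.
\]
Here $\xi_t = T_{N_t}(\Theta_t)-\E[T_{N_t}(\Theta_t)\mid \Theta_t]$ is a martingale-difference noise and $b_{N_t}$ is the bias.

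\textbf{Properties of $f$.} The map $\Theta \mapsto \log q_\Theta(y_i)$ is linear, so $f$ is convex: $\log Z$ is a log-partition function with Hessian equal to the covariance of the bounded sufficient statistics. Because the sufficient statistics lie in $[0,1]$, this Hessian is bounded in operator norm, giving a Lipschitz constant $\beta$ for $\nabla f$. On the compact parameter set, $\Theta_t$ and $T_{N_t}$ are bounded; the ratio estimator is bounded because $T_2$ is a weighted average of bounded statistics, so $T$ is a convex combination of bounded statistics. Hence $\E[\|\xi_t\|^2\mid\Theta_t]\le\sigma^2$ uniformly. The assumption $Z\ge c$ will be used, if needed, to control the ratio bias and variance.

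\textbf{One-step inequality.} Using firm nonexpansiveness of the prox and the fixed-point property $\Theta^\star=\operatorname{prox}_{\gamma g}(\Theta^\star-\gamma\nabla f(\Theta^\star))$, expand $\|\Theta_{t+1}-\Theta^\star\|^2$. Convexity together with $\beta$-smoothness and $\gamma_t\le1/\beta$ give the standard proximal-gradient descent inequality:
\[
\|\Theta_{t+1}-\Theta^\star\|^2\le\|\Theta_t-\Theta^\star\|^2-2\gamma_t\bigl(F(\Theta_{t+1})-F(\Theta^\star)\bigr)-2\gamma_t\langle \xi_t+b_t,\Theta_{t+1}-\Theta^\star\rangle .
\]
In the noise term, replace $\Theta_{t+1}$ by $\bar\Theta_{t+1}=\operatorname{prox}_{\gamma_t g}(\Theta_t-\gamma_t\nabla f(\Theta_t))$, which is $\mathcal F_t$-measurable. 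The difference satisfies $\|\Theta_{t+1}-\bar\Theta_{t+1}\|\le\gamma_t\|\xi_t+b_t\|$ by nonexpansiveness, contributing $O(\gamma_t^2)$. After conditioning, $\langle\xi_t,\bar\Theta_{t+1}-\Theta^\star\rangle$ has zero mean. The bias term is bounded by $2\gamma_t\|b_t\|\,D$, where $D$ is the diameter of the compact set.

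Taking expectations, with $e_t=\E\|\Theta_t-\Theta^\star\|^2$, we get
\[
e_{t+1}\le e_t-2\gamma_t\,\E[F(\Theta_{t+1})-F^\star]+C\gamma_t^2+C\gamma_t\|b_{N_t}\|.
\]
The last two terms are summable by hypothesis.

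\textbf{Conclusion.} By Robbins--Siegmund (or directly, since all terms are deterministic after expectation), $e_t$ converges and $\sum\gamma_t\E[F(\Theta_{t+1})-F^\star]<\infty$. Since $\sum\gamma_t=\infty$, we have $\liminf_t \E[F(\Theta_t)-F^\star]=0$.

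\textbf{Main obstacle.} The difficulty is upgrading $\liminf$ of the gap to $\lim e_t=0$. We would use uniqueness of $\Theta^\star$ plus compactness and continuity of $F$ to get a modulus: for each $\epsilon>0$ there is $\delta(\epsilon)>0$ with $F(\Theta)-F^\star\ge\delta(\epsilon)$ whenever $\|\Theta-\Theta^\star\|\ge\epsilon$. We then argue by contradiction. If $e_t\to e>0$, a Markov/Chebyshev-type split would give $\E[F(\Theta_t)-F^\star]$ bounded below by a positive constant along the tail. This needs uniform boundedness of $\|\Theta_t-\Theta^\star\|^2$ by $D^2$, so that $e>0$ forces $\P(\|\Theta_t-\Theta^\star\|\ge \sqrt{e/2})\ge e/(2D^2)$. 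That lower bound contradicts $\sum\gamma_t\E[F-F^\star]<\infty$.

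A secondary technical point is that the ratio estimator is biased and not unbiased for $\nabla\log Z$. The hypothesis on $b_{N_t}$ handles the bias directly, but the bounded-variance claim still needs the compactness and $Z\ge c$ assumptions (to keep importance weights $q_\Theta/q_\Phi$ bounded). If needed, we would also invoke Proposition 3.2 of \cite{chen2024likelihood}.
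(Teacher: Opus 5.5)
Your argument is correct, but it takes a different route from the paper. The paper's proof is a one-paragraph reduction. It invokes Theorem 17 of \cite{chen2024likelihood}, a convergence result for perturbed proximal-gradient iterates in general exponential-family graphical models, and checks only the two CR--MRF-specific ingredients: the finite sample space makes $\nabla\log Z$ Lipschitz, and compactness lets one choose $N_t$ so that the bias $b_{N_t}(\Theta)$ of the ratio estimator is bounded uniformly in $\Theta$.

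You instead reprove the needed convergence result directly, in four steps:
\begin{enumerate}
\item The inexact prox-gradient inequality $2\gamma_t\{F(\Theta_{t+1})-F^\star\}\le\|\Theta_t-\Theta^\star\|^2-\|\Theta_{t+1}-\Theta^\star\|^2-2\gamma_t\langle\xi_t+b_t,\Theta_{t+1}-\Theta^\star\rangle$. This follows from the prox optimality condition, convexity and the descent lemma with $\gamma_t\le1/\beta$; firm nonexpansiveness is not needed.
\item The $\mathcal F_t$-measurable surrogate $\bar\Theta_{t+1}$, which removes the martingale term.
\item A deterministic recursion for $e_t$.
\item A compactness/uniqueness modulus that upgrades $\liminf\E[F-F^\star]=0$ to $e_t\to0$.
\end{enumerate}

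Your route buys transparency: it shows exactly where each hypothesis enters. It also shows that $Z\ge c$ is superfluous here. The estimator $T_N=T_1/T_2$ is the self-normalized average $\sum_l w_lT(V_l)/\sum_l w_l$, so it lies in the convex hull of the bounded sufficient statistics. The noise is therefore bounded with no condition on the importance weights, and your closing claim that compactness and $Z\ge c$ are needed for the variance contradicts your own earlier observation.

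The cost is that you must state explicitly several readings the paper leaves implicit:
\begin{itemize}
\item The bias-summability hypothesis must be taken as a deterministic bound, uniform over the compact set, or as an in-expectation bound, since you use it inside $\E[\cdot]$.
\item The iterates themselves, not just $\Theta^\star$, must remain in the compact set, because the update contains no projection.
\item After your rescaling by $n$, the step condition becomes $n\gamma_t\le1/\mathrm{Lip}(\nabla\log Z)$. This is consistent with the theorem's loosely specified $\beta$.
\end{itemize}
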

With covariates present, the overall procedure remains the same but since $Z(\Theta)$ and $\nabla Z(\Theta)$ are now functions of $X$, we repeat this procedure for every $i = 1, \ldots, n$. The computational cost is $O\!\left(N\left(pM+\tbinom p2\right)\right)$ per gradient step, repeated per subject in the covariate case, but batchable over both $n$ and $N$.

\section{Connections with Pairwise and List-Wise Comparison Models}\label{sec:conn}
In this section we show how preference models such as the BTD model \eqref{eq:BTD_model} and the weak PL model \eqref{eq:DL_model} are connected to the CR--MRF model. In summary, our results show that the BT and the PL models for pairwise and list preferences can be recovered from the CR--MRF model under the assumption that items being rated/compared are \emph{independent}, using an \emph{exponential race} argument. We first focus on the homogeneous case and then discuss the case with covariates as the  arguments carry forward. To begin, consider the homogeneous case with $M = 1$, which is just an Ising model with support $\{0,1\}^p$. We have the following.
\begin{proposition}\label{prop:CR_MRF-BT}
    Consider the homogeneous CR--MRF model \eqref{eq:crmrf-joint} with $M = 1$. Assume $\Theta^{\mathrm{edge}} = 0$, i.e. the variables are independent. Let $\gamma_j = e^{\Theta^{\mathrm{node}}_{j,1}}$ for $j = 1, \ldots, p$.
    Then,  $\P[Y_j < Y_k] \propto \gamma_j, \, \P[Y_j > Y_k] \propto \gamma_k, \, \P[Y_j = Y_k] \propto \nu_{jk}\sqrt{\gamma_j\gamma_k} $ where the normalizing constant is $\gamma_j + \gamma_k + \nu_{jk} \sqrt{\gamma_j \gamma_k}$ and $\nu_{jk} = (1 + \gamma_j \gamma_k)/\sqrt{\gamma_j \gamma_k}\ge 2$.
   
\end{proposition}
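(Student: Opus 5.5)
With $M=1$ and $\Theta^{\mathrm{edge}}=0$, the model \eqref{eq:crmrf-joint} factorizes into independent Bernoulli coordinates, so the plan is a direct computation from the closed form \eqref{eq:indep-homog}. The only node statistic is $\phi_{j,1}(y_j)=\mathbf{1}(y_j\le 0)=\mathbf{1}(y_j=0)$. Hence the weights are $e^{\theta^{\mathrm{node}}_{j,1}}=\gamma_j$ at $y_j=0$ and $1$ at $y_j=1$, which gives
$$\P[Y_j=0]=\frac{\gamma_j}{1+\gamma_j},\qquad \P[Y_j=1]=\frac{1}{1+\gamma_j}.$$
This agrees with \eqref{eq:indep-homog} at $M=1$. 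It is also consistent with the BT convention that lower scores are better, so that item $j$ "wins" when $Y_j=0$.

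Next I enumerate the four joint outcomes of $(Y_j,Y_k)$, which under independence have probabilities proportional to products of the above weights, with common denominator $(1+\gamma_j)(1+\gamma_k)$.
\begin{itemize}
\item $\{Y_j<Y_k\}=\{Y_j=0,Y_k=1\}$ has unnormalized weight $\gamma_j\cdot 1=\gamma_j$.
\item $\{Y_j>Y_k\}$ has weight $\gamma_k$.
\item $\{Y_j=Y_k\}$ is the union of $(0,0)$ and $(1,1)$, with weight $\gamma_j\gamma_k+1$.
\end{itemize}
The total weight is $\gamma_j+\gamma_k+1+\gamma_j\gamma_k=(1+\gamma_j)(1+\gamma_k)$, which confirms the normalization.

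To match the Davidson form \eqref{eq:BTD_model}, I rewrite the tie weight as $1+\gamma_j\gamma_k=\nu_{jk}\sqrt{\gamma_j\gamma_k}$, which defines $\nu_{jk}=(1+\gamma_j\gamma_k)/\sqrt{\gamma_j\gamma_k}$. The normalizing constant then reads $\gamma_j+\gamma_k+\nu_{jk}\sqrt{\gamma_j\gamma_k}$, exactly as claimed. For the bound, set $t=\sqrt{\gamma_j\gamma_k}>0$; then $\nu_{jk}=t+1/t\ge 2$ by AM–GM, with equality iff $\gamma_j\gamma_k=1$.

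There is no real obstacle in this argument. The only points requiring care are the sign convention (the node statistic indicates the low level, so $\gamma_j$ weights $Y_j=0$, the "better" outcome) and noting that the tie event combines two outcomes, which is what produces the pair-dependent $\nu_{jk}$ rather than a global $\nu$.
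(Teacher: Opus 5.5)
Your proposal is correct and follows essentially the same route as the paper: independence gives Bernoulli coordinates with $\P[Y_j=0]=\gamma_j/(1+\gamma_j)$, and you enumerate the four outcomes over the common denominator $(1+\gamma_j)(1+\gamma_k)$. You then rewrite the tie weight $1+\gamma_j\gamma_k$ as $\nu_{jk}\sqrt{\gamma_j\gamma_k}$ and apply AM--GM for $\nu_{jk}\ge 2$, exactly as the paper does, with your equality case $\gamma_j\gamma_k=1$ being a harmless extra.
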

The proof is straightforward; but included in the supplement for the sake of completeness. 
In summary, the BTD model with pair-specific tie propensity $\nu_{jk}\ge 2$ can be derived as pairwise comparisons \emph{derived} under an independent CR--MRF model with two levels. The only distinction is that while the BTD model introduces a new global tie parameter, the pairwise comparisons derived here have a tie propensity which is a function of the original parameters, bounded below at 2. 

For the PL model, we begin by the exponential-race representation of the model. Let us consider $p$ players competing in a race such that their arrival times $U_j$ are independent $\text{Exponential}(\gamma_j)$ where $\gamma_j$ are the same strength parameters defined in Section \ref{sec:background}. Then the probability that the $j$-th player wins the race is $\P(U_j < U_k, \, k \neq j)$. It is easily seen that:
\begin{align*}
\P(U_j < U_k, \, k \neq j) &= \int_{0}^\infty \P(U_k> t, \, k \neq j)f_{U_j}(t) dt \\
& = \int_0^\infty \gamma_j e^{-t \sum_{j=1}^p \gamma_j} dt = \dfrac{\gamma_j}{\sum_{j=1}^p \gamma_j}.
\end{align*}
Assuming the first-ranked player arrived at time $t$, the analysis for the next player with the minimum arrival time requires conditioning on the event that $U_k > t$  for all $k \neq j$. But due to the memoryless property of the exponential distribution, one arrives at an expression similar to the above display, only this time the denominator being a sum over the strengths of the players remaining, and the numerator being the strength of the second-ranked player. The exponential race correspondence has been recorded in the literature \citep{diaconis1988group, caron2012bayesian}. However, our key insight lies in relating the exponential race construction to the CR--MRF model, as follows. 

Fix $\Delta>0$ and define $ N_j=\left\lfloor {U_j}/{\Delta}\right\rfloor$ so that $N_j\in\{0,1,2,\ldots\}$ for $j = 1, \ldots, p$. Next define $Y_j = \min\{N_j, M\}$ where $M$ represents the maximum ordinal score of the discrete variable we have considered before. Writing $q_j=e^{-\gamma_j\Delta}$, we get $\P(Y_j=m)=(1-q_j)q_j^m, m=0,\ldots,M-1,$ and $\P(Y_j=M)=q_j^M$. In other words, each $Y_j$ is a truncated geometric random variable. Thus,
\begin{align*}
\log\frac{\P(Y_j=m-1)}{\Pr(Y_j=m)}& =\gamma_j\Delta, \quad m = 1, \ldots, M-1,\\
\log\frac{\P(Y_j=M-1)}{\Pr(Y_j=M)} &=\log(e^{\gamma_j\Delta}-1).
\end{align*}
This means by discretizing the underlying exponential arrival times, we obtain a joint distribution of $(Y_1, \ldots, Y_p)$ which is 
an independent ($\Theta^{\mathrm{edge}} = 0$) CR--MRF model. 
Moreover, the consecutive ratios do not depend on the levels $m = 1, \ldots, M-1$ whereas even in the independent CR--MRF case, a richer class of models is attainable since it allows level-dependent consecutive ratios, i.e., $\Theta^{\mathrm{node}}_{j, m}$ is allowed to vary over $m$. 

In the above, $\Delta$ can be interpreted as the time-grid for discretization. Hence, we fix it to $\Delta = 1$ in the appropriate unit. Additionally, set $Y_j = N_j$, i.e. we remove the right-truncation considered above. The joint distribution of $(Y_1, \ldots, Y_p)$ is simply a distribution of independent Geometric $(q_1, \ldots, q_p)$ random variables. Importantly, all variables $Y_j, \, j = 1, \ldots, p$ satisfy the consecutive ratio and the memoryless property. Moreover, their joint distribution belongs to the exponential family. Next, consider rankings obtained by observed values of $(Y_1, \ldots, Y_p)$. The next result recovers the weak ranking Plackett-Luce model by computing the probability of such events derived from the joint distribution of $(Y_1, \ldots, Y_p)$.
\begin{theorem}\label{thm:CR_MRF-DL}
The variables $(Y_1,\ldots,Y_p)$ induce a weak ranking by grouping items that fall in the same smallest occupied bin. Let $A$ be the set of remaining items and let $\varnothing\neq S\subseteq A$. Define $a_j={(1-q_j)}/{q_j}=e^{\gamma_j}-1$. The probability of any $S \subseteq A$ to be the next tied group is
$$
\P(S\mid A) = \frac{\prod_{j\in S}a_j}
{\sum_{\varnothing\neq B\subseteq A}\prod_{j\in B}a_j}.
$$
After removing $S$, the same rule applies to the remaining items.
\end{theorem}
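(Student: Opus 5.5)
The plan is to compute the probability of the event ``$S$ is the next tied group'' directly from the independent geometric laws, and then use the memoryless property to handle the conditioning on $A$. First I treat the first stage, with $A$ equal to the full set of items. The event that $S$ is the first tied group is the disjoint union, over bins $t=0,1,2,\ldots$, of the events
\[
\{Y_j=t \ \forall j\in S\}\cap\{Y_k>t \ \forall k\in A\setminus S\}.
\]
Independence and the geometric law $\P(Y_j=t)=(1-q_j)q_j^t$, $\P(Y_j>t)=q_j^{t+1}$ give
\[
\sum_{t\ge 0}\prod_{j\in S}(1-q_j)q_j^{t}\prod_{k\in A\setminus S}q_k^{t+1}
=\prod_{j\in S}\frac{1-q_j}{q_j}\;\sum_{t\ge0}\Big(\prod_{l\in A}q_l\Big)^{t+1}.
\]
Writing $Q_A=\prod_{l\in A}q_l$, this equals $\prod_{j\in S}a_j\cdot Q_A/(1-Q_A)$. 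The factor $Q_A/(1-Q_A)$ depends only on $A$, not on $S$.

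Next I identify the normalizer. Summing over all nonempty $B\subseteq A$ must give total probability one, because some nonempty group is almost surely the first occupied bin. So $Q_A/(1-Q_A)=\big(\sum_{\varnothing\neq B\subseteq A}\prod_{j\in B}a_j\big)^{-1}$. As a consistency check, I will verify this algebraically: $\sum_{B\neq\varnothing}\prod_{j\in B}a_j=\prod_{l\in A}(1+a_l)-1=Q_A^{-1}-1$, since $1+a_l=1/q_l$. This yields the displayed formula for $\P(S\mid A)$.

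The main obstacle is the recursive claim that after removing $S$, the same rule applies to the remaining items. Suppose the first group $S$ occupied bin $t$. Conditional on that event, the remaining variables $Y_k$, $k\in A\setminus S$, are independent and conditioned on $Y_k>t$. By the memorylessness of the geometric law, the shifted variables $Y_k-t-1$ are again independent Geometric$(q_k)$, and this holds jointly and independently of $t$. The subsequent weak ranking of $A\setminus S$ depends only on the relative order and ties among these shifted variables. Hence it has the same law as a fresh first stage on the set $A\setminus S$, and applying the first-stage computation again gives the rule. An induction on $|A|$ then gives the full sequential description.

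The care needed is in stating the conditioning properly: one conditions on the full history, namely the groups chosen so far and their bin values. Because the shifted residual law does not depend on that history, the conditional probability equals $\P(S'\mid A\setminus S)$.
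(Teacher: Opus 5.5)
Your proposal is correct and follows essentially the same route as the paper: sum over the common bin $t$ to obtain $\prod_{j\in S}a_j\cdot Q_A/(1-Q_A)$, identify the normalizer through $\sum_{\varnothing\neq B\subseteq A}\prod_{j\in B}a_j=\prod_{l\in A}(1+a_l)-1=Q_A^{-1}-1$, and use memorylessness for the recursion. Your version states explicitly two points the paper leaves implicit: that conditioning is on the full history, and that the shifted residual variables $Y_k-t-1$ are fresh independent geometrics regardless of $t$.
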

Thus discretized-race CR--MRF model assigns a tied subset $S$ the stage weight $w_{\mathrm{CR}}(S)=\prod_{j\in S}a_j$, whereas the weak PL model \eqref{eq:DL_model} uses
$ w_{\mathrm{PL}}(S) =\delta_{|S|} (\prod_{j\in S}a_j)^{1/|S|}$. In particular, if all item strengths are equal, $a_j\equiv a$ ($q_j = q$), then
$ w_{\mathrm{CR}}(S)=a^{|S|}, \qquad w_{\mathrm{PL}}(S)=\delta_{|S|}a$.
Hence the two models agree upon taking $\delta_r=a^{r-1}$ and the equal-parameter independent CR--MRF model gives an exact PL weak-ranking model. The original PL model with no ties \eqref{eq:PL_model} is next shown to be a limiting case of the weak-ranking model in Theorem \ref{thm:CR_MRF-DL}.
\begin{corollary}\label{cor:exact_PL}
    Recall $Y_j$ from Theorem \ref{thm:CR_MRF-DL} which was discretized from $U_j \sim \text{Exponential}(\gamma_j)$. Rescale the strength parameters $(\gamma_1, \ldots, \gamma_p)$ to $(e^c \gamma_1, \ldots, e^c \gamma_p)$. Then under the limit $c \to -\infty$, for any permutation $\pi$ of $[p]$,
    $$\P(Y_{\pi_1} < \ldots < Y_{\pi_p}) \rightarrow \prod_{j=1}^{p-1} \dfrac{\gamma_{\pi_j}}{\sum_{k=j}^p \gamma_{\pi_k}}.$$
\end{corollary}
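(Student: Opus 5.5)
Under the rescaling, $U_j\sim\text{Exponential}(e^c\gamma_j)$, so $q_j=e^{-e^c\gamma_j}$ and $a_j=e^{e^c\gamma_j}-1$. Everything follows from the stage-wise weak-ranking formula of Theorem~\ref{thm:CR_MRF-DL}. A strict order $Y_{\pi_1}<\cdots<Y_{\pi_p}$ means every tied group produced by the sequential procedure is a singleton, in the order $\pi_1,\pi_2,\ldots$. Applying Theorem~\ref{thm:CR_MRF-DL} with $A_j=\{\pi_j,\ldots,\pi_p\}$ and $S=\{\pi_j\}$ at stage $j$, and chaining the stages (the memoryless property is what licenses the theorem's ``same rule applies'' clause), gives the exact finite-$c$ identity
$$\P(Y_{\pi_1}<\cdots<Y_{\pi_p})=\prod_{j=1}^{p-1}\frac{a_{\pi_j}}{\sum_{\varnothing\neq B\subseteq A_j}\prod_{l\in B}a_l}.$$
The last stage has $|A_p|=1$ and contributes the factor $1$. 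I would state this product formula as a consequence of Theorem~\ref{thm:CR_MRF-DL} rather than reprove it.

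Next, expand each factor as $c\to-\infty$. Writing $\epsilon=e^c\to0$, we have $a_l=\epsilon\gamma_l+O(\epsilon^2)$ uniformly over the finitely many $l$. In the denominator, the singleton sets give $\sum_{l\in A_j}a_l=\epsilon\sum_{l\in A_j}\gamma_l+O(\epsilon^2)$. Every set $B$ with $|B|\ge2$ contributes $O(\epsilon^{|B|})=O(\epsilon^2)$, and there are at most $2^p$ such sets. Divide numerator and denominator by $\epsilon$ and let $\epsilon\to0$. Because $\sum_{l\in A_j}\gamma_l>0$, each factor converges to $\gamma_{\pi_j}/\sum_{k=j}^p\gamma_{\pi_k}$. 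The product has finitely many factors, so the limits multiply and give the stated PL expression.

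The main obstacle is making rigorous the step that turns the sequential-stage description of Theorem~\ref{thm:CR_MRF-DL} into the product formula for the joint event. Specifically, one must check that after the first group $S$ is removed, the remaining shifted variables $Y_l-Y_{\pi_1}-1$ (conditioned on exceeding the current minimum) are again independent geometrics with the same $q_l$. This is the discrete memorylessness $\P(Y_l\ge t+1+s\mid Y_l\ge t+1)=q_l^{s}$, so the conditional law at each stage is again of the form in Theorem~\ref{thm:CR_MRF-DL}. If the theorem is already read as a statement about the full sequential process, this step is immediate and the rest is the elementary asymptotic computation above.
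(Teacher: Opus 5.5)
Your proposal is correct and follows essentially the same route as the paper: it writes the strict-order probability as a product of singleton stage probabilities from Theorem~\ref{thm:CR_MRF-DL}, expands $a_j=\epsilon\gamma_j+O(\epsilon^2)$ with $\epsilon=e^c$, and notes that each denominator is $\epsilon\sum_{k\in A}\gamma_k+O(\epsilon^2)$. The differences are cosmetic. The paper evaluates the denominator through the closed form $e^{\epsilon\sum_{k\in A}\gamma_k}-1$ instead of bounding the $|B|\ge2$ subset terms, and it leaves implicit the memoryless chaining of stages that you spell out.
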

\begin{remark}
    It is important to interpret the setting of Corollary \ref{cor:exact_PL} in terms of the relative strengths of items instead of their absolute strengths. Indeed, as $c \to -\infty$, the absolute strengths approach 0 but the relative strengths stay the same (the common scale $e^c$ cancels out). For PL, only the relative strengths matter not the absolute ones. Further, Corollary~\ref{cor:exact_PL} concerns the uncensored scores $Y_j=N_j$; for a fixed cap $M$, $\P(Y_1=\cdots=Y_p=M)=\exp(-Me^c\sum_j\gamma_j)\to1$, so the limit requires $Me^c\to\infty$.
\end{remark}
Finally, let us consider a covariate-dependent situation. Specifically, let the exponential arrival times depend on a covariate $x$, namely $U_j \sim \text{Exp}(\gamma_j(x))$. Define the discretized arrival times in a similar manner, $Y_j = \lfloor U_j\rfloor$ so that $Y_j \sim \text{Geometric}(q_j(x))$. Thus, the joint distribution of $Y = ( Y_1, \ldots, Y_p)\mid x$ is the product Geometric distribution with parameters $q_j(x), \, j = 1, \ldots, p$. Conditional on $x$, $Y$ satisfies the consecutive ratio assumption and if $\log \gamma_j$ depends linearly on $x$, then we obtain \eqref{eq:cov-coef-node}. Moreover, for each $x$, the derived weak-ranking distribution has the form as in Theorem \ref{thm:CR_MRF-DL}.
\subsection{Information Loss due to Converting Ordinal Data into Ranking}
We formalize the information loss incurred by comparison models due to converting ordinal data to comparison data by coarsening. We use the canonical representation
$p_\eta(y)=\exp\{\eta^\top T(y)-A(\eta)\}$ of Section~\ref{sec:mle}, where
$\eta=\operatorname{vec}(\Theta)$ contains the $pM$ node parameters and
one copy of each of the $\binom p2$ undirected edge parameters.
Let $R=r(Y)$ record all pairwise orderings and ties, but not their rating
levels. Its likelihood under the same CR--MRF is
$p_\eta^R(r)=\sum_{y:r(y)=r}p_\eta(y)$. Then, we have the following result.
\begin{proposition}[Fisher information lost by ranking]
\label{prop:cr-coarsening-information}
Let $p\ge2$, $M\ge1$, and let the node and edge parameters be finite.
Write $\mathcal I_Y(\eta)$ and $\mathcal I_R(\eta)$ for the expected
Fisher information for $\eta$ from one ordinal rating vector and its
complete weak ranking, respectively. Then
\begin{equation}
 \mathcal I_Y(\eta)-\mathcal I_R(\eta)
 =\mathbb E_\eta\!\left[
   \operatorname{Cov}_\eta\{T(Y)\mid R\}\right]\succeq0.
 \label{eq:cr-coarsening-identity}
\end{equation}
If $M\ge2$, the difference is positive definite. Thus, with at least
three rating levels, replacing ratings by their weak ranking loses
Fisher information in every nonzero natural-parameter direction,
including when all ties are retained and $\Theta^{\mathrm{edge}}\ne0$.
\end{proposition}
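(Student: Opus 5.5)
The plan is to exploit the exponential-family structure together with the law of total covariance. For the full data $Y$ the score is $T(Y)-\nabla A(\eta)$, so $\mathcal I_Y(\eta)=\operatorname{Cov}_\eta\{T(Y)\}$. For the coarsened data we have $p^R_\eta(r)=\sum_{y:r(y)=r}\exp\{\eta^\top T(y)-A(\eta)\}$. Differentiating under this finite sum gives the score $\nabla\log p^R_\eta(r)=\mathbb E_\eta\{T(Y)\mid R=r\}-\nabla A(\eta)$ (the usual missing-information principle of Louis/Orchard--Woodbury). Hence $\mathcal I_R(\eta)=\operatorname{Cov}_\eta\{\mathbb E_\eta[T(Y)\mid R]\}$. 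Subtracting, and using $\operatorname{Cov}(T)=\operatorname{Cov}(\mathbb E[T\mid R])+\mathbb E[\operatorname{Cov}(T\mid R)]$, gives \eqref{eq:cr-coarsening-identity}. Positive semidefiniteness is immediate, since each conditional covariance is psd. All sums are finite and parameters are finite, so differentiation and interchange are trivially justified.

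For positive definiteness, fix $u\neq0$. Since $u^\top\mathbb E[\operatorname{Cov}(T\mid R)]u=\mathbb E[\operatorname{Var}(u^\top T(Y)\mid R)]$, it suffices to show the following. Finite parameters give every $y\in\mathcal Y$ positive probability, so every fiber $\{y:r(y)=r\}$ carries its full support. It therefore suffices that $f_u(y):=u^\top T(y)$ is non-constant on at least one fiber. Equivalently, I will prove the contrapositive: if $f_u$ is constant on every fiber, then $u=0$.

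Write $g_j(a)=\sum_{m>a}u_{j,m}$ for the node part, so that $u_{j,m}=g_j(m-1)-g_j(m)$. Then
\[
f_u(y)=\sum_j g_j(y_j)+\sum_{j<k}u_{jk}(M-y_j)(M-y_k)/M^2 .
\]
I will use specific fibers, each of which contains several rating vectors once $M\ge2$. The first is the all-tied fiber $y=m\mathbf 1$, $m=0,\dots,M$. The second is, for each $j$, the fiber in which item $j$ is strictly lowest and all other items are tied above it; this fiber contains $(y_j,y_{-j})=(a,b\mathbf 1)$ for all $a<b$, in particular $(0,1),(0,2),(1,2)$. The third is the analogous fibers with $j$ strictly highest, and fibers with a pair $\{j,k\}$ tied at the bottom and the rest tied above. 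Equating $f_u$ across the points of each fiber gives linear equations in $(g_j)$ and $(u_{jk})$. Taking second differences in $(a,b)$ within the ``$j$ lowest'' fiber isolates the combination $\sum_{k\ne j}u_{jk}$ via the bilinear term $(M-a)(M-b)$. Comparing fibers that differ only in which pair is tied then separates the individual $u_{jk}$. Once the edges vanish, the single-item-differing fibers force each $g_j$ to be constant in its argument, so every $u_{j,m}=0$.

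The main obstacle is this identifiability step in the minimal case $M=2$. There each fiber contains only about three points, so the equation count is tight. As a sanity check I would first do $p=2$, which has $2M+1=5$ unknowns. The tie fiber and the two strict fibers each contribute two equations, six in total, and I would verify directly that the resulting $6\times5$ system has full column rank. For general $p$ I would then reduce to pairs. For a given pair $(j,k)$, I would place all other items in a block tied with one of $j,k$, and choose fibers in which the contributions of the other items cancel in the differences. This should yield the $p=2$ system for $(g_j,g_k,u_{jk})$ perturbed only by known combinations that are eliminated using the all-tied fiber.

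It is also worth noting why $M=1$ is excluded. With $M=1$ the weak ranking of a binary vector determines $y$ up to the swap $0\leftrightarrow1$ when all items are tied, so fibers are too small and definiteness can fail. The argument must therefore genuinely use the existence of a middle level, and I expect any rank computation to break down exactly when the points $(0,1),(0,2),(1,2)$ collapse.
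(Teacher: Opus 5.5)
Your covariance identity, and your reduction of definiteness to showing that $f_u=u^\top T$ being constant on every fiber forces $u=0$, are correct and match the paper. The gap is in the identifiability step. That step is the entire content of the positive-definiteness claim, and you leave it unproved exactly at the threshold case $M=2$.

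\textbf{Where the proposed mechanism fails.} Second differences in $(a,b)$ inside the ``$j$ strictly lowest'' fiber require $a_1\ne a_2$ and $b_1\ne b_2$ with $\max(a_1,a_2)<\min(b_1,b_2)$. That needs $M\ge3$. At $M=2$ the fiber is just $\{(0,1),(0,2),(1,2)\}$ in your coordinates, so no such grid exists.

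For $M\ge3$ your route can be completed. Second differences give $\sum_{l\ne j}u_{jl}=0$. In the $\{j,k\}$-tied-at-bottom fiber they give $\sum_{l\ne j,k}(u_{jl}+u_{kl})=0$, hence $-2u_{jk}=0$.

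For $M=2$ and $p\ge3$, however, you only assert that a reduction to pairs ``should'' work, without naming the fibers or the cancellations. Your $p=2$ rank check (the $6\times5$ system does have rank $5$) does not transfer to general $p$ by itself.

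\textbf{The missing idea.} The fibers you list are the right ones; what is missing is how to extract equations from them. Evaluate $f_u$ at vectors where every item not under study sits at the top level $M$. In your parametrization every term involving such an item vanishes, since $g_k(M)=0$ and $M-y_k=0$, so no differencing is needed. The steps are:
\begin{enumerate}
\item The points $M\mathbf 1-s\,e_j$, $1\le s\le M$, lie in one fiber, where $f_u=g_j(M-s)$. This gives $g_j\equiv c_j$ on $\{0,\dots,M-1\}$.
\item The points $M\mathbf 1-e_j-e_k$ and $M\mathbf 1-2e_j-2e_k$ share a fiber (the all-tied one when $p=2$). They give $c_j+c_k+u_{jk}/M^2=c_j+c_k+4u_{jk}/M^2$, so $u_{jk}=0$.
\item Comparing $M\mathbf 1$ with $(M-1)\mathbf 1$ gives $\sum_k c_k=0$. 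Comparing $M\mathbf 1-e_j$ with $(M-1)\mathbf 1-e_j$ gives $c_j=\sum_k c_k=0$. Hence $g_j\equiv0$ and $u_{j,m}=g_j(m-1)-g_j(m)=0$.
\end{enumerate}
This uses only $M\ge2$ and works for every $p$.

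It is the mirror image, under $y\mapsto M-y$, of the paper's proof. The paper recentres at $y=0$, writes $F(y)=\sum_j f_j(y_j)+\sum_{j<k}b_{jk}y_jy_k$, and uses the pairs $m e_j$; $e_j+e_k$ versus $2e_j+2e_k$; $0$ versus $\mathbf 1$; and $e_j$ versus $\mathbf 1+e_j$. It then needs a separate minimality check, which your $g_j$ coordinates make unnecessary.
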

We note that complete collection of pairwise comparisons contains exactly the same information as $R$; but fitting a product of pairwise likelihoods introduces a separate issue. We provide further details in Supplementary Section \ref{sec:cr-coarsening-race} on explicit calculations of information loss under the exponential race model.
\section{Numerical experiments}\label{sec:simulations}

The objective of our experiments in this section is to validate the \emph{derived} preferences from observed multivariate ordinal data. In particular, we are interested in out-of-sample preference prediction under the covariate-dependent CR--MRF model and the covariate-dependent BT/PL model. By out-of-sample \emph{derived} preference we mean how each model predicts a preference between items $(j,k)$ for a judge with covariate $x$ with ratings by this judge not included in the training data. To this end, the BTD/strict PL model we use introduces dependence of the underlying strength parameters on the covariates. Specifically, the strength of item $j$ compared by a judge with covariate $x$ is modeled as $\log \gamma_j(x) = \alpha_j + \beta_j^{\top} x$. To fit the covariate-dependent comparison models, we extract the pairwise and rank preferences from the observed ordinal scores.
We take $p = 5$ items and $q = 2$ covariates, one categorical encoded by a $\mathrm{Bernoulli}(1/2)$ variable and the other a continuous covariate generated from $\mathrm{N}(0,1)$. Then we generate data from the covariate-dependent CR--MRF \eqref{eq:cov-coef-node} with all $\binom{5}{2}$ edges present at magnitude sampled from $\mathrm{Unif}(1,2)$. These edges are then assigned random signs, e.g. positive has probability $0.2$. We report a five-level scale, $M = 4$, and a binary response, $M = 1$, each replicated ten times with $n = 500$ training and $1{,}000$ test samples. 

Set $\P_{jk}(x) = \P(Y_j > Y_k \mid x) + \tfrac12 \P(Y_j = Y_k \mid x)$. Given a test point $x^\star$ and its corresponding true set of parameters $\Theta_0(x^\star)$, let $\P^{0}_{jk}(x^\star)$ be the true probability of the event. Under an ordinal model which models the scores directly using a covariate-dependent CR--MRF, the estimated probability is then $\hat{\P}_{jk}(x^\star) = \P(Y_j > Y_k \mid \hat{\Theta}(x^\star)) + \tfrac12 \P(Y_j = Y_k \mid \hat{\Theta}(x^\star))$ where $\hat{\Theta}$ is the estimated value of $\Theta$. This can be computed by sampling auxiliary ordinal ratings $Y \sim \text{CR--MRF}(\hat{\Theta}(x^\star))$. In our results, we consider two choices of $\hat{\Theta}$: the MLE and the maximum pseudo-likelihood estimator (MPLE). Another set of estimates of $\P_{jk}(x^\star)$ can be obtained by fitting the comparison models to the derived pairwise and list preferences from the observed scores. Formally, in case of the BTD/strict PL model, we obtain estimates of $\alpha_j, \beta_j$ which can be used to form estimates of $\P_{jk}(x^\star)$ using expressions in Section \ref{sec:background}; for strict PL ties were resolved using stable sort and it estimates $\hat{\P}_{jk}(x)$ as the probability of the event where item $j$ is ranked higher than item $k$. To summarize the results, we report the root mean squared error (RMSE) of $\hat{\P}_{jk}(x_{i})$ from its true value $\P^0_{jk}(x)$, averaged over the $1{,}000$ test judges and the $\binom{5}{2}$ item pairs. We also report the Brier scores of the estimates $\hat{\P}_{jk}(x)$ by comparing it to the actual observed preference outcome. While RMSE measures how well each model recovers the pairwise preference probabilities themselves, the Brier score evaluates how well calibrated these predicted probabilities are in comparison to the actual outcome. Numbers in Table \ref{tab:sim-pref} clearly show the advantage of modeling the ordinal ratings directly as estimated preference probabilities for the MLE provide better approximations and calibration to the actual outcome at the same time.

\begin{table}[!ht]
\scriptsize
\centering
\setlength{\tabcolsep}{1pt}
\caption{ Mean (sd) of RMSE and Brier score of the fitted pairwise preference probabilities $\hat{P}_{jk}(x_{i})$ over $10$ replicates. \emph{Oracle} is based on the true parameters.}
\label{tab:sim-pref}

\scalebox{0.9}{
\begin{tabular}{l cc cc}
\toprule
 & \multicolumn{2}{c}{$M = 4$} & \multicolumn{2}{c}{$M = 1$} \\
\cmidrule(lr){2-3}\cmidrule(lr){4-5}
 & RMSE & Brier & RMSE & Brier \\
\midrule
CR--MRF    & $\mathbf{0.028}$ $(0.002)$& $\mathbf{0.1449}$ $(0.0001)$  & $\mathbf{0.028}$ $(0.002)$ & $\mathbf{0.1215}$ $(0.0001)$ \\
MPLE   & $0.044$ $(0.004)$ & $0.1462$ $(0.0004)$ & $0.034$ $(0.003)$& $0.1219$ $(0.0004)$ \\
PL     & $0.037$ $(0.002)$ & $0.1455$ $(0.0001)$ & $0.039$ $(0.002)$ & $0.1222$ $(0.0001)$ \\
BTD    & $0.031$ $(0.002)$ & $0.1451$ $(0.0001)$ & $0.030$ $(0.001)$ & $0.1216$ $(0.0001)$ \\
\midrule
Oracle & --- & $0.1441$  & --- & $0.1207$ \\
\bottomrule
\end{tabular}
}
\end{table}
\normalsize

\begin{figure*}
    \centering
\includegraphics[width=0.9\textwidth, height = 8.3cm]{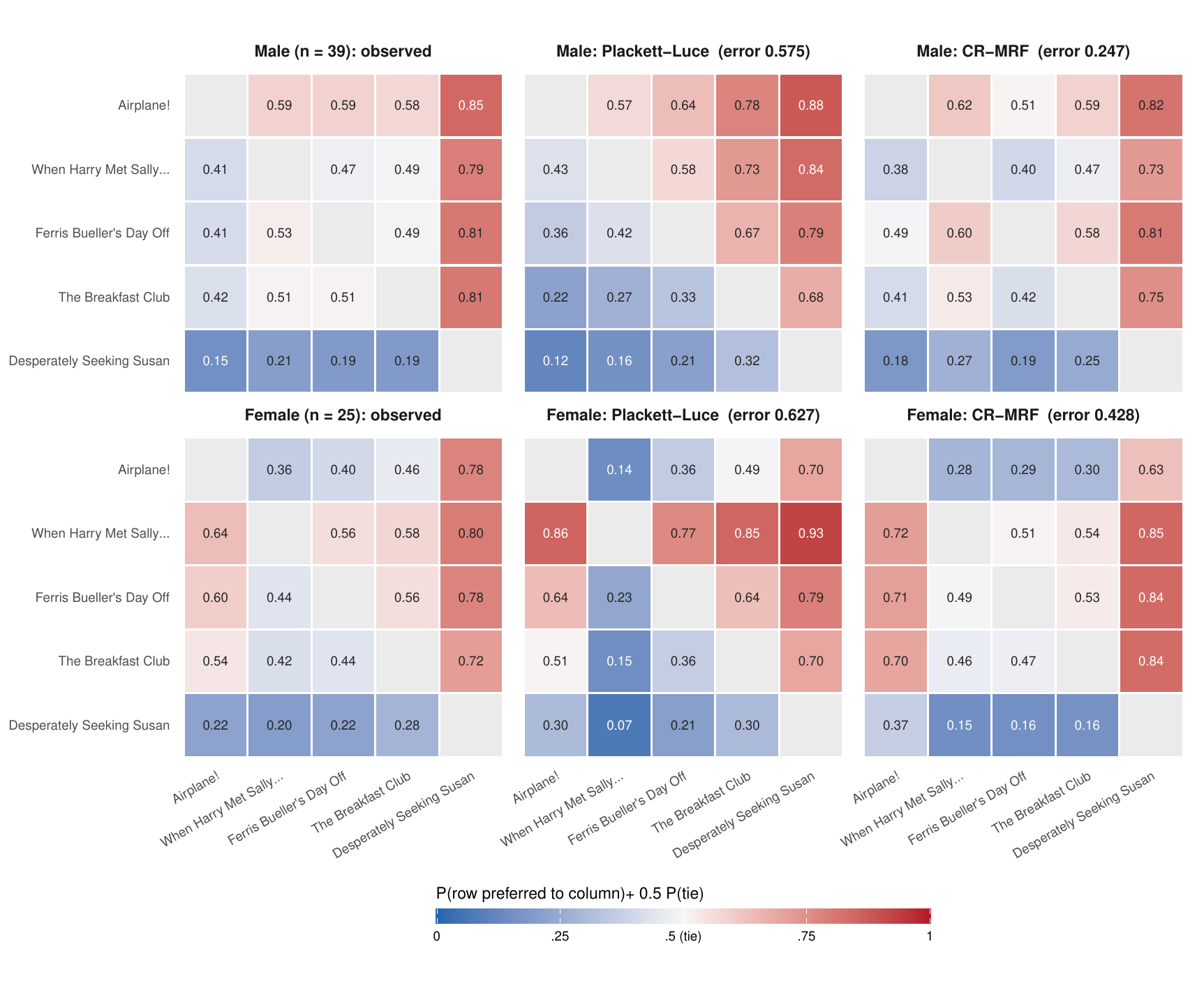}
    \caption{Observed and model-implied pairwise preference probabilities on the 64 held-out MovieLens reviewers, split by reviewer gender, with ordinal ratings ($M = 4$).}
    \label{fig:movielens_summary1}
\end{figure*}
\section{Application to MovieLens}
We use the MovieLens 1M data \citep{harper2015movielens}, which contains $1{,}000{,}209$ ratings on  $\{0,\dots,4\}$ from $6{,}040$ reviewers on $3{,}706$ movies, together with each reviewer's gender and age. Age is recorded in seven groups (under 18, 18--24, \ldots, 56+). We use the group midpoints, standardised, as a continuous covariate. 
We selected $p=5$ widely rated movies. The resulting block contains $n=234$ reviewers who rated all five movies, covering both genders ($76$ female, $158$ male) and all seven age groups.

We shall primarily focus on comparisons with preference models in the main draft. Results from running regression with a covariate dependent CR--MRF model are provided in detail in Section \ref{sec:suppregression} of the Supplement. We split the $n = 234$ data points into a training set of size $n_1 = 170$ training points and $n_2 = 64$ test points. Among the test points there were 39 male and 25 female reviewers. We fit the covariate-dependent CR--MRF model and the covariate-dependent strict PL model with log-strengths depending linearly on the covariates and ties resolved by stable sort. 
Recall $\P_{jk}(x)$ from Section \ref{sec:simulations}. For the 5 movies we obtain $\binom{p}{2}$ pairwise comparisons. Each fitted model provides an estimated $\hat{\P}_{jk}(x^\star)$ where $x^\star$ is a test point which when averaged over all test points provides $\hat{P}_{jk}$. We also compute its observed version $\P_{jk}$ defined as $\P_{jk} = (1/64) \sum_{i =1}^{n_2} \{\mathbb{I}(Y_{ij} > Y_{ik}) + (1/2) \mathbb{I}(Y_{ij} = Y_{ik})\} $. 
These same quantities are also compared by conditioning separately on $x^\star = \{\text{Male}, \text{Female}\}$. The results of $\sum_{j,k}(\P_{jk} - \hat{\P}_{jk})^2$ are summarized in Figure \ref{fig:movielens_summary1}. It shows the observed pairwise comparison probabilities for each gender and the fitted probabilities for all comparisons from the CR--MRF and PL models. Clearly, the comparison probabilities show the proposed model performs better than strict PL. In fact, the results show that PL based estimates of the comparisons are better calibrated for movies with distinct preferences, e.g. `Airplane!' vs `Desperately Seeking Susan' for which the out-of-sample comparison probability is 0.85 for males and 0.78 for females. When the underlying preference is more ambiguous (close to 0.5), then CR--MRF based probabilities perform much better. This is to be expected when underlying ratings information is thrown out and thus ratings for a pair of movies of (1, 4) is treated the same as (3, 4).  Section~\ref{sec:suppcomp} contains additional comparison results with ties broken at random.

\section{\uppercase{Conclusions}}

Pairwise and list-wise comparison problems are classical, but they face a renewed interest due to their importance in the \emph{alignment problem}, where a machine may learn to produce responses that are likely to be rated higher by humans. When full ordinal feedback is available, as in MovieLens or HelpSteer, we establish that the classical models such as BT and PL look at \emph{coarsened} pairwise or list-wise comparisons, which may be wasteful. We then propose an alternative via the CR--MRF model that allows covariate-dependent joint modeling of the full ordinal feedback. Through simulations and data analysis, we establish its benefits in held-out prediction over BT and PL. The intractability of the normalizing constant is a challenge for likelihood-based inference, and we mitigate this building on the approach of \citet{chen2024likelihood}. Future work can consider if the CR--MRF loss can prove to be a reasonable alternative to BT or PL loss in reward models used in the LLM alignment with human feedback.
\section{\uppercase{Supplementary material and Code}}
The supplementary document contains proofs of all theorems and propositions, further details of the CR--MRF model where we specialize the information loss result of Proposition \ref{prop:cr-coarsening-information} for the exponential-race condition and discuss methods to handle missing data, additional numerical experiment and data analysis results. Code and usage examples are available at \href{https://github.com/chenyujie1104/CR-MRF-ordinal-preference-data}{https://github.com/chenyujie1104/CR-MRF-ordinal-preference-data}.

\bibliographystyle{plainnat}
\bibliography{chap3}

\clearpage

\clearpage
\onecolumn
\setcounter{section}{0}
\setcounter{equation}{0}
\setcounter{figure}{0}
\setcounter{table}{0}
\setcounter{page}{1}
\renewcommand{\thepage}{S.\arabic{page}}

\renewcommand{\thesection}{S.\arabic{section}}
\renewcommand{\theequation}{S\arabic{equation}}
\renewcommand{\thefigure}{S\arabic{figure}}
\renewcommand{\thetable}{S\arabic{table}}
\renewcommand{\theHsection}{S.\arabic{section}}
\renewcommand{\theHequation}{S\arabic{equation}}
\renewcommand{\theHfigure}{S\arabic{figure}}
\renewcommand{\theHtable}{S\arabic{table}}

\section*{\uppercase{Supplementary Material}}
\section{Further details of the CR--MRF model}\label{sec:crmrf-extra}
\subsection{Existence of MLE}
For compactness, consider the homogeneous CR--MRF model and let us rewrite the model in the canonical exponential family form as $p_\eta(y) = \exp\{\eta^\T T(y) - A(\eta)\}$ where $\eta = \text{vec}(\Theta)$ and $T(y)$ be the collection of all node and edge-specific sufficient statistics, with $A(\eta) = \log Z(\Theta)$. We assume it is minimal \citep[Chapter 3.2]{wainwright2008graphical}.
Suppose we observe $y_i \overset{iid}{\sim}p_\eta, \, i = 1,\ldots, n$. Then the MLE $\hat{\eta}$ satisfies the score equation $\nabla A(\hat{\eta}) = \overline{T}_n$ where $\overline{T}_n = n^{-1} \sum_{i=1}^n T(y_i)$. But using the properties of exponential family, we know that $\nabla A(\eta) = \E_\eta[T(y)]$. Hence, the MLE $\hat{\eta}$ is the unique point under which the expected sufficient statistic matches the empirical mean of the sufficient statistic. \cite{barndorff2014information} showed that MLE in this case exists if and only if $\overline{T}_n \in \mathrm{ri}(\mathcal{M})$ where $\mathcal{M} = \mathrm{conv}\{T(y): y \in \mathcal{Y}\}$ (the convex hull) and $\mathrm{ri}(A)$ stands for the relative interior for a closed convex set $A$. Conversely, the MLE does not exist if $\bar{T}_n \in \mathcal{M}\,\backslash \,\mathrm{ri}(\mathcal{M})$.
We reformulate this condition for our CR--MRF model in the next result.
\begin{theorem}[Existence of the MLE]\label{thm:mle_existence}
    Suppose we observe $y_i \overset{iid}{\sim}p_\eta, \, i = 1,\ldots, n$. Then the MLE $\hat{\eta} = \argmax_\eta \sum_{i=1}^n \log p_\eta(y_i)$ does not exist if and only if there exists $a \neq 0$ and $b \in \mathbb{R}$ such that $a^\T z \leq b$ for all $z \in \mathcal{M}$ with strict inequality for at least one $z$ and $a^\T T(y_i) = a^\T \overline{T}_n = b$ for all $i = 1, \ldots, n$.
\end{theorem}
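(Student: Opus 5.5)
We plan to reduce the statement to the Barndorff-Nielsen criterion quoted just before the theorem: under minimality, the MLE exists if and only if $\overline{T}_n \in \mathrm{ri}(\mathcal{M})$. It therefore suffices to show that $\overline{T}_n \notin \mathrm{ri}(\mathcal{M})$ is equivalent to the existence of a nontrivial supporting hyperplane $\{z: a^\T z = b\}$. This hyperplane should contain every observed $T(y_i)$, and $\mathcal{M}$ should not lie entirely inside it. Since $\overline{T}_n$ is an average of points of $\mathcal{M}$, we always have $\overline{T}_n\in\mathcal{M}$. Hence non-existence is the same as $\overline{T}_n \in \mathcal{M}\setminus \mathrm{ri}(\mathcal{M})$, that is, $\overline{T}_n$ lies on the relative boundary.

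For the ``only if'' direction, we use that $\mathcal{M}$ is a polytope, being the convex hull of the finite set $\{T(y):y\in\mathcal{Y}\}$. A point of a nonempty convex set outside its relative interior admits a \emph{properly} separating supporting hyperplane (Rockafellar, Thm. 11.6). Concretely, there exist $a\neq 0$ and $b$ with $a^\T z\le b$ on $\mathcal{M}$, $a^\T \overline{T}_n = b$, and $\mathcal{M}\not\subset\{a^\T z=b\}$. The last condition gives the strict inequality at some $z\in\mathcal{M}$, and hence at some vertex $T(y)$. Then $b = a^\T\overline{T}_n = n^{-1}\sum_i a^\T T(y_i)$, where each summand is at most $b$. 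So every summand equals $b$, which gives $a^\T T(y_i)=b$ for all $i$.

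For the ``if'' direction, suppose $a,b$ are as in the statement. Then $\overline{T}_n$ lies in the face $F=\mathcal{M}\cap\{a^\T z=b\}$, and $F$ is a proper face because some $z\in\mathcal{M}$ has $a^\T z<b$. A proper face of a convex set is disjoint from its relative interior. To verify this, suppose $\overline{T}_n\in\mathrm{ri}(\mathcal{M})$ and take $z_0$ with $a^\T z_0<b$. For small $\epsilon>0$ the point $\overline{T}_n+\epsilon(\overline{T}_n - z_0)$ lies in $\mathcal{M}$, since it stays in the affine hull and is close to $\overline{T}_n$. But its $a$-value is $b+\epsilon(b-a^\T z_0)>b$, a contradiction. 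Hence $\overline{T}_n\notin\mathrm{ri}(\mathcal{M})$, and the MLE does not exist.

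The main subtlety is making sure the separation in the ``only if'' direction is proper rather than merely weak. If the hyperplane contained all of $\mathcal{M}$, the strict inequality would fail. This is exactly why the relative interior must be used, and with it Rockafellar's proper-separation theorem rather than ordinary supporting hyperplanes. Minimality is needed only to invoke the quoted criterion; the rest of the argument is finite-dimensional convex geometry.
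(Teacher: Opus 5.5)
Your proposal is correct and follows the paper's proof. Both arguments reduce to the Barndorff-Nielsen relative-interior criterion, use a supporting hyperplane at $\overline{T}_n$ together with the averaging argument for the ``only if'' direction, and use that the proper face $F=\{z\in\mathcal{M}: a^\T z=b\}$ misses $\mathrm{ri}(\mathcal{M})$ for the ``if'' direction; your version additionally justifies the strict inequality via proper separation (Rockafellar, Thm.~11.6) and proves the face/relative-interior disjointness explicitly, both of which the paper asserts without argument.
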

A standard situation where this may occur is when all sufficient statistics $T(y_i)$ take the same values for every $y_i$. This condition is a natural extension of the existence of MLE in a Bernoulli model where the MLE (of the natural parameter) does not exist if all observations are either 0 or 1. 
\subsection{Node Conditional Distribution for the Homogeneous CR--MRF}
\label{sec:ord-cond}
 
To derive the conditional distribution of $Y_{j}$ given $Y_{-j} = (Y_{k})_{k \ne j}$, define the neighborhood aggregate
\begin{equation}
\label{eq:r-homog}
  r_{j}(y_{-j})
  \;=\;
  \sum_{k \ne j} \theta_{jk} \, \frac{M - y_{k}}{M}.
\end{equation}
Then,
\begin{equation}
\label{eq:cond-homog}
  p(y_{j} = l \mid y_{-j}, \Theta)
  \;\propto\;
  \exp\!\left\{
    \sum_{m=l+1}^{M} \theta_{j,m}
    \;+\;
    \frac{M - l}{M} \, r_{j}(y_{-j})
  \right\},
  \qquad l = 0, 1, \ldots, M.
\end{equation}
The ratio of two adjacent categories is obtained by subtracting the exponents in \eqref{eq:cond-homog}:
\begin{equation*}
  \log \frac{p(y_{j} = l - 1 \mid y_{-j}, \Theta)}
            {p(y_{j} = l \mid y_{-j}, \Theta)}
  \;=\;
  \left[\, \sum_{m=l}^{M} \theta_{j,m}
        \,-\, \sum_{m=l+1}^{M} \theta_{j,m} \,\right]
  \;+\;
  \left[\, \frac{M - (l - 1)}{M} \,-\, \frac{M - l}{M} \,\right]
    r_{j}(y_{-j}).
\end{equation*}
Substituting the definition of $r_{j}(y_{-j})$ from \eqref{eq:r-homog} yields,
\begin{equation}
\label{eq:adjcat-homog}
  \log \frac{p(y_{j} = l - 1 \mid y_{-j}, \Theta)}
            {p(y_{j} = l \mid y_{-j}, \Theta)}
  \;=\;
  \theta_{j,l}
  \;+\;
  \sum_{k \ne j} \frac{\theta_{jk}}{M^{2}} \, (M - y_{k}),
  \qquad l = 1, \ldots, M.
\end{equation}
This is an adjacent-category logit form. The right-hand side, consisting of a category-specific intercept $\theta_{j,l}$ plus an edge-driven linear predictor in the neighbors of $j$, gives the \emph{consecutive log-odds} ratio between categories $l{-}1$ and $l$. 
The same adjacent-category form is what makes pseudolikelihood inference tractable. Each conditional is an adjacent-category logit regression of $Y_{j}$ on the covariates $x_{k} = M - y_{k}$ for $k \ne j$, with parallel slopes across cutpoints and category-specific intercepts $\theta_{j,1}, \ldots, \theta_{j,M}$. The edge parameter $\theta_{jk}$ is then recovered as the fitted slope on $x_{k}$ multiplied by $M^{2}$. 

\subsection{Node Conditional Distribution for the Covariate-dependent CR--MRF}
\label{sec:ord-cond-cov}
 
Defining the individual-specific neighborhood
aggregate:
\begin{equation}
\label{eq:r-cov}
  r_{j}(y_{i,-j}; x_{i})
  \;=\;
  \sum_{k \ne j} \theta_{jk}(x_{i}) \, \frac{M - y_{i,k}}{M},
\end{equation}
the full conditional of $Y_{i,j}$ given the remaining variables for
individual $i$ is,
\begin{equation}
\label{eq:cond-cov}
  p(y_{i,j} = l \mid y_{i,-j}, x_{i}, \Theta)
  \;\propto\;
  \exp\!\left\{
    \sum_{m=l+1}^{M} \theta_{j,m}(x_{i})
    \;+\;
    \frac{M - l}{M} \, r_{j}(y_{i,-j}; x_{i})
  \right\},
  \qquad l = 0, 1, \ldots, M,
\end{equation}
and the adjacent-category ratio is,
\begin{equation}
\label{eq:adjcat-cov}
  \log \frac{p(y_{i,j} = l - 1 \mid y_{i,-j}, x_{i}, \Theta)}
            {p(y_{i,j} = l \mid y_{i,-j}, x_{i}, \Theta)}
  \;=\;
  \theta_{j,l}(x_{i})
  \;+\;
  \sum_{k \ne j} \frac{\theta_{jk}(x_{i})}{M^{2}} \, (M - y_{i,k}),
  \qquad l = 1, \ldots, M.
\end{equation}

Substituting the linear parameterizations \eqref{eq:cov-coef-node} into \eqref{eq:adjcat-cov} gives the explicit
form of the log-odds ratio,
\begin{align*}
  \log \frac{p(y_{i,j} = l - 1 \mid y_{i,-j}, x_{i}, \Theta)}
            {p(y_{i,j} = l \mid y_{i,-j}, x_{i}, \Theta)}
  &= \theta_{j,l,0}
     \;+\; \sum_{v=1}^{q} \theta_{j,l,v}\, x_{i,v} \\
  &\quad +\; \sum_{k \ne j} \frac{\theta_{jk,0}}{M^{2}}\,(M - y_{i,k}) \\
  &\quad +\; \sum_{k \ne j} \sum_{v=1}^{q}
     \frac{\theta_{jk,v}}{M^{2}}\, x_{i,v}\,(M - y_{i,k}).
\end{align*}
This is an adjacent-category logistic regression of $Y_{i,j}$ on three groups of covariates. The pure covariates $x_{i,v}$ for $v = 1, \ldots, q$ enter with non-parallel slopes $\theta_{j,l,v}$ that
depend on the cutpoint $l$. 
The neighbor values $M - y_{i,k}$ for $k \ne j$ enter with parallel slopes $\theta_{jk,0}/M^{2}$. And the interactions $x_{i,v}(M - y_{i,k})$ enter with parallel slopes $\theta_{jk,v}/M^{2}$, both shared across cutpoints. 
\section{Proofs}\label{sec:supproofs}

\subsection{Proof of Theorem \ref{thm:mle_convergence}}
This Theorem follows from \cite[Theorem 17]{chen2024likelihood} who showed convergence of the iterates for a general class of exponential family graphical models of which CR--MRF is a member. The main difference is that for the CR--MRF case, since the sample space is finite $\nabla \log Z(\Theta)$ has Lipschitz gradients. One still needs the compactness assumption of the parameter space to control the bias of the ratio-estimator of $\nabla \log Z(\Theta)$. Indeed, this implies that there exists importance sample sizes $N_t$ such that over the sequence $\{\Theta_t\}$ obtained by the iterates in \eqref{eq:pmle_updates} such that the corresponding bias $b_{N_t}(\Theta_t)$ in estimating $\nabla \log Z(\Theta_t)$ can be uniformly bounded over $\Theta$.
\subsection{Proof of Proposition \ref{prop:CR_MRF-BT}}
Since $\Theta^{\mathrm{edge}} = 0$, the variables are independent. In fact, with $\gamma_j = e^{\Theta_{j,1}^{\mathrm{node}}}$,  $Y_j \sim \text{Bernoulli}(1/(1+\gamma_j))$.
Hence,
\begin{align*}
    &\P[Y_j < Y_k] = \P[Y_j =0]\P[Y_k = 1] = \dfrac{\gamma_j}{1 + \gamma_j} \dfrac{1}{1+ \gamma_k}, \\
    &\P[Y_j > Y_k] = \P[Y_j = 1]\P[Y_k = 0] = \dfrac{1}{1 + \gamma_j} \dfrac{\gamma_k}{1+ \gamma_k},\\
    &\P[Y_j = Y_k] = \P[Y_j = 1]\P[Y_k = 1] + \P[Y_j = 0]\P[Y_k = 0]  = \dfrac{1 + \gamma_j \gamma_k}{(1 + \gamma_j)(1 + \gamma_k)}. 
\end{align*}
Substituing $\nu_{jk} = (1 + \gamma_j \gamma_k)/\sqrt{\gamma_j \gamma_k}$ yields the expressions in Proposition \ref{prop:CR_MRF-BT}. The AM-GM inequality then gives the lower bound of $\nu_{jk}$.
\subsection{Proof of Theorem \ref{thm:CR_MRF-DL}}
We note that for $S$ to be the next tied group, there must be an $m\ge0$ such that,
$$Y_j=m,\quad j\in S, \qquad Y_k>m,\quad k\in A\setminus S.$$
Hence,
\begin{align*}
\P(S\mid A)
&=\sum_{m=0}^{\infty}
\left\{\prod_{j\in S}(1-q_j)q_j^m\right\}
\left\{\prod_{k\in A\setminus S}q_k^{m+1}\right\}\\
&=\frac{\left(\prod_{j\in A}q_j\right)
\left(\prod_{j\in S}\frac{1-q_j}{q_j}\right)}
{1-\prod_{j\in A}q_j}\\
&=\frac{\prod_{j\in S}a_j}
{\left(\prod_{j\in A}q_j\right)^{-1}-1}.
\end{align*}
The result follows by noting that:
\[
\left(\prod_{j\in A}q_j\right)^{-1}
=e^{\sum_{j\in A}\gamma_j}.
\]
Also,
\[
\sum_{\varnothing\neq B\subseteq A}\prod_{j\in B}a_j
=
\prod_{j\in A}(1+a_j)-1
=
e^{\sum_{j\in A}\gamma_j}-1.
\]
Memoryless property of the Geometric distribution gives the sequential recursion after the selected tied group is removed.

\subsection{Proof of Corollary \ref{cor:exact_PL} }
Set $\epsilon= e^c$ and $a_j(c) = \exp(\epsilon \gamma_j) - 1 =\epsilon \gamma_j+ O(\epsilon^2)$. By Theorem \ref{thm:CR_MRF-DL}, the next nonempty tied group $S \subseteq A$ has probability:
$$\P_ c(S |A) = \dfrac{\prod_{j \in S}  a_j(c)}{\exp(\epsilon \sum_{k \in A} \gamma_k ) - 1}.$$
The denominator is $\epsilon \sum_{k \in A }\gamma_k + O(\epsilon^2)$ . Thus, $ \P_ c(\{j\}|A) = \gamma_j/ \sum_{k \in A} \gamma_k + O(\epsilon)$, whereas $\P_c(S |A) =
O(\epsilon^{|S|  - 1}) $ for $ |S| \geq 2.$  Multiplying the singleton transition probabilities gives the original PL probabilities defined in \eqref{eq:PL_model}. Moreover, there are finitely many remaining sets and at most $p$ stages, so the probability of any tie is $O(\epsilon)$.
\subsection{Proof of Theorem \ref{thm:mle_existence}}
{\bf a) MLE does not exist $\implies a, b$ exists:} From \cite{barndorff2014information}, we know that the MLE does not exist if $\overline{T}_n \in \mathcal{M}\, \backslash \,\mathrm{ri}(\mathcal{M})$. Hence, $\overline{T}_n$ lies on the relative boundary of $\mathcal{M}$. Since $\mathcal{M}$ is closed and convex, by the supporting hyperplane theorem there exists $a$ and $b \in \mathbb{R}$ such that the plane $a^\T z = b$ passes through $\overline{T}_n$, i.e. $a^\T \overline{T}_n = b$. Moreover, this hyperplane satisfies $a^\T z \leq b$ for every $z \in \mathcal{M}$ with strict inequality for at least one $z$. Next, observe that,
\begin{align*}
    b = a^\T \overline{T}_n = \frac{1}{n} \sum_{i=1}^n a^\T T(y_i), \quad a^\T T(y_i) \leq b \,\, \forall i.
\end{align*}
Thus, it must be that $a^\T T(y_i) = b$ for all $i = 1, \ldots, n$.

{\bf b) $a, b$ exists $\implies$ MLE does not exist:} For this direction, we will show if a vector $a$ and a scalar $b$ exists such that $a^\T T(y_i) = a^\T \overline{T}_n  = b$ for all $i = 1, \ldots, n$, and $a^\T z \leq b$ for all $z \in \mathcal{M}$ then the MLE does not exist. Clearly, in this case,
$$\overline{T}_n \in F = \{z \in \mathcal{M}: a^\T z = b\}.$$
Since there is at least one $z \in \mathcal{M}$ such that $a^\T z < b$, $F \subseteq \mathcal{M} \, \backslash \mathrm{ri}(\mathcal{M})$. Hence, the MLE does not exist by \cite{barndorff2014information}.
\subsection{Proof of Proposition~\ref{prop:cr-coarsening-information}}
\label{subsec:cr-coarsening-proof}
Recall the conventions of Section~\ref{sec:crmrf}:
$\mathcal Y=\{0,\ldots,M\}^p$, $\eta=\operatorname{vec}(\Theta)$, and
$A(\eta)=\log Z(\Theta)$. Vectorization includes one coordinate for each
$\theta^{\mathrm{node}}_{j,m}$ and each $\theta^{\mathrm{edge}}_{jk}$,
$j<k$, so that $d:=\dim(\eta)=pM+\binom p2$. The corresponding entries
of $T(y)$ are:
\[
 \phi_{j,m}(y_j)=\mathbf 1\{y_j\le m-1\},\qquad
 \phi_{jk}(y_j,y_k)=\frac{(M-y_j)(M-y_k)}{M^2}.
\]
Finite parameters give positive probability to every $y\in\mathcal Y$.
The weak ranking $R=r(Y)$ records the signs of all $Y_j-Y_k$;
its information is unaffected by reversing which end of the scale is preferred. The proof is then as follows.

\begin{proof}
Let $\mu_\eta=\mathbb E_\eta[T(Y)]=\nabla A(\eta)$, and denote the
likelihood score functions by $\dot\ell_Y$ and $\dot\ell_R$.
For $\mathcal Y_r=\{y:r(y)=r\}$, differentiation of a finite sum gives:
\begin{align}
 \dot\ell_Y(y;\eta)&=T(y)-\mu_\eta,\notag\\
 \dot\ell_R(r;\eta)
 &=\nabla_\eta\log\sum_{y\in\mathcal Y_r}p_\eta(y)\notag\\
 &=\sum_{y\in\mathcal Y_r}
       \frac{p_\eta(y)}{p_\eta^R(r)}\dot\ell_Y(y;\eta)
 =\mathbb E_\eta[\dot\ell_Y(Y;\eta)\mid R=r].
 \label{eq:cr-coarsening-score}
\end{align}
Both score functions have mean zero, so the law of total covariance yields:
\begin{align}
 \mathcal I_Y(\eta)&=\operatorname{Cov}_\eta\{T(Y)\},\notag\\
 \mathcal I_R(\eta)&=\operatorname{Cov}_\eta\{
                         \mathbb E_\eta[T(Y)\mid R]\},\notag\\
 D(\eta):=\mathcal I_Y(\eta)-\mathcal I_R(\eta)
 &=\mathbb E_\eta[\operatorname{Cov}_\eta\{T(Y)\mid R\}].
 \label{eq:cr-coarsening-total-covariance}
\end{align}
Consequently, $a^\top D(\eta)a=0$ if and only if $a^\top T(y)$ is
constant on every ranking class $\mathcal Y_r$. Positivity makes this
an equality at every state in that class.

Suppose $M\ge2$ and this equality holds. Expand each edge product and
absorb its linear terms into the node functions. Subtracting the value
at $0$ permits each node function to be normalized to zero at $0$:
\begin{equation}
 F(y):=a^\top\{T(y)-T(0)\}
 =\sum_{j=1}^p f_j(y_j)+\sum_{j<k}b_{jk}y_jy_k,
 \qquad f_j(0)=0.
 \label{eq:cr-coarsening-additive-quadratic}
\end{equation}
Let $e_j$ be the $j$th coordinate vector and $\mathbf 1$ the all-one vector.
For fixed $j$, the vectors $me_j$, $m=1,\ldots,M$, have the same weak
ranking; therefore $f_j(m)=c_j$ for all $m\ge1$. For $j<k$, the vectors
$e_j+e_k$ and $2e_j+2e_k$ also have the same ranking, including when
$p=2$ and both are all-tied. Their node contributions agree, and their
edge contributions differ by $3b_{jk}$. Hence every $b_{jk}=0$.

The all-tied vectors $0$ and $\mathbf 1$ now give $\sum_j c_j=0$.
Finally, $e_j$ and $\mathbf 1+e_j$ have the same ranking and lie in
$\mathcal Y$ because $M\ge2$. Their values of $F$ give
$c_j=\sum_k c_k=0$ for every $j$. Thus $F$ is identically zero.

The representation is minimal. If a linear combination of $T$ is
constant, evaluation at $M\mathbf 1$ makes the constant zero. Varying
one coordinate with all others at $M$ forces all node coefficients to
vanish; varying two coordinates then forces each edge coefficient to
vanish. It follows that $a=0$, proving $D(\eta)\succ0$.
\end{proof}

\section{Information Loss Calculations under Capped exponential races with unknown common scale}
\label{sec:cr-coarsening-race}
We use the arrival time convention of Section~\ref{sec:conn}: smaller values rank earlier.
Let $U_j\overset{\mathrm{ind}}\sim\operatorname{Exp}(\gamma_j)$ and
$Y_j=\min\{\lfloor U_j/\Delta\rfloor,M\}$, $j=1,2$, with fixed
$\Delta>0$. Writing $t_j=\gamma_j\Delta$ and $q_j=e^{-t_j}$ gives:
\begin{equation}
 \Pr(Y_j=m)=
 \begin{cases}
  (1-q_j)q_j^m,&0\le m<M,\\
  q_j^M,&m=M.
 \end{cases}
 \label{eq:cr-coarsening-capped-geometric}
\end{equation}
This is exactly an independent CR--MRF, with
$\theta^{\mathrm{node}}_{j,m}=t_j$ for $m<M$ and
$\theta^{\mathrm{node}}_{j,M}=\log(e^{t_j}-1)$.
The terminal consecutive-ratio parameter is allowed to differ from the
others. Capping therefore does not violate the CR--MRF specification.

Using the log-worth convention $u_j=\log\gamma_j$ of Section~\ref{sec:background}, write:
\begin{equation}
 u_1=\rho+\delta/2,\qquad u_2=\rho-\delta/2,
 \qquad \delta=\log(\gamma_1/\gamma_2).
 \label{eq:cr-coarsening-race-coordinates}
\end{equation}
Here $\rho$ is an unknown common log-rate. Let $C$ record $Y_1<Y_2$,
$Y_1=Y_2$, or $Y_1>Y_2$. At $\delta=0$, put $t=e^\rho\Delta$,
$q=e^{-t}$, and define:
\begin{align}
 V&=\frac{t^2q(1-q^M)}{(1-q)^2},\qquad
 w=\frac{q(1-q^{2M})}{1+q},\notag\\
 b&=\frac{t}{2(1-q)},\qquad
 h=\frac{\partial\log w}{\partial\rho}
   =t\left\{\frac{2M q^{2M}}{1-q^{2M}}-\frac1{1+q}\right\}.
 \label{eq:cr-coarsening-race-quantities}
\end{align}
The expected information matrices for $(\rho,\delta)$ are
\begin{equation}
 \mathcal I_{Y,(\rho,\delta)}
 =\begin{pmatrix}2V&0\\0&V/2\end{pmatrix},\qquad
 \mathcal I_{C,(\rho,\delta)}
 =\begin{pmatrix}
   2wh^2/(1-2w)&0\\0&2wb^2
  \end{pmatrix}.
 \label{eq:cr-coarsening-two-parameter-information}
\end{equation}

To verify these expressions, the likelihood score for one log-rate is
\[
 \dot\ell_u(y)=
 \begin{cases}
 t\{q/(1-q)-y\},&y<M,\\
 -Mt,&y=M.
 \end{cases}
\]
Its mean is zero and its second moment is
\[
 t^2\left[(1-q)\sum_{m=0}^{M-1}q^m
           \left\{\frac q{1-q}-m\right\}^2+M^2q^M\right]=V.
\]
The pair likelihood scores for $\rho$ and $\delta$ are, respectively,
$\dot\ell_u(Y_1)+\dot\ell_u(Y_2)$ and
$\{\dot\ell_u(Y_1)-\dot\ell_u(Y_2)\}/2$; independence proves the first
matrix in \eqref{eq:cr-coarsening-two-parameter-information}.
For the comparison probabilities, summing a finite geometric series gives
\begin{equation}
 \Pr(Y_1<Y_2)
 =(1-q_1)q_2\frac{1-(q_1q_2)^M}{1-q_1q_2},
 \label{eq:cr-coarsening-win-probability}
\end{equation}
and interchanging indices gives $\Pr(Y_1>Y_2)$. At $\delta=0$ both
equal $w$. The $\delta$-scores of the three outcomes $(<,=,>)$ are
$(b,0,-b)$, and their $\rho$-scores are
$(h,-2wh/(1-2w),h)$. Their weighted outer products give the second
matrix, including its zero off-diagonal entries.

Thus scale and contrast are Fisher-orthogonal at equal rates. Whenever
$h\ne0$, both matrices are nonsingular, and the efficient information for
$\delta$ with $\rho$ unknown equals its diagonal information entry.
Consequently,
\begin{equation}
 \frac{\mathcal I_{C,\delta\mid\rho}}
      {\mathcal I_{Y,\delta\mid\rho}}
 =\frac{1+q^M}{1+q},\qquad \delta=0,\ h\ne0.
 \label{eq:cr-coarsening-exact-ratio}
\end{equation}
The same ratio holds when $\rho$ is known, without the $h\ne0$
qualification. 

This is a local statement at equal
rates, not a uniform efficiency ratio over all parameter values.
Away from equal rates, the off-diagonal information generally does not
vanish, and efficient contrast information must instead be computed as
$\mathcal I_{\delta\delta}-\mathcal I_{\delta\rho}
\mathcal I_{\rho\rho}^{-1}\mathcal I_{\rho\delta}$.

\subsection{Limiting Comparisons}
For $M=1$, the contrast ratio is one at equal rates. For $M\ge2$
and $0<q<1$, it is strictly less than one. At fixed $t>0$, letting
$M\to\infty$ gives $1/(1+q)$. In the joint limit $t\to0$ and
$Mt\to\infty$,
\[
 \mathcal I_{Y,\delta\mid\rho}\longrightarrow\tfrac12,
 \qquad \mathcal I_{C,\delta\mid\rho}\longrightarrow\tfrac14,
 \qquad
 \frac{\mathcal I_{C,\delta\mid\rho}}
      {\mathcal I_{Y,\delta\mid\rho}}\longrightarrow\tfrac12.
\]
The displayed efficient-information limits are along nonsingular points.
Comparison information for the common scale tends to zero as ties vanish.
If $M$ is fixed and $t\to0$, the contrast ratio instead tends to one
while both contrast information quantities tend to zero. This terminal-category
degeneracy is not the uncensored PL limit of Corollary~\ref{cor:exact_PL}.

\section{Handling missing data}\label{sec:missing_data}
The CR--MRF model through its conditional distributions allows for very easy handling of missing data. We provide a brief discussion here. Suppose we observe $y_i^{p \times 1} \mid X_i \sim \text{CR--MRF}(\Theta(X_i))$ independently for $i = 1, \ldots, n$. Assume that for a subset $B \subset \{1, \ldots, n\}$, some components of the $y_{i}, \, i\in B$ are missing, i.e. for $i \in B$, we only observe $y_{D_i}$ where $D_i$ is a proper subset of $\{1, \ldots, p\}$ .  Then for such an $i \in B$, we have the observed likelihood as
\begin{equation}
    p_{obs}(y_{D_i} \mid \Theta(X_i)) = \sum_{y_{D_i^c}} p(y_i \mid \Theta(X_i)), 
\end{equation}
where $p(y_i \mid \Theta(X_i))$ is the full CR--MRF distribution with parameter $\Theta(X_i)$. Suppose the node and edge parameters are parameterized as in \eqref{eq:cov-coef-node}. Stacking all relevant parameters of this model into $\Theta$, the overall log-likelihood and the MLE is then
\begin{align*}
    \ell(\Theta) &= \sum_{i \notin B} \log p(y_i \mid \Theta(X_i)) + \sum_{i \in B } \log p_{obs}(y_{D_i} \mid \Theta(X_i))\\
    \hat{\Theta} & = \argmax \ell(\Theta).
\end{align*}
To obtain the MLE in this case, one can iterate $\Theta_{t+1} = \Theta_{t} + \gamma_t \nabla \ell(\Theta_t)$ where $\nabla \ell(\Theta_t)$ contains two terms - a) gradient over fully-observed data points; this is simple and is what we do in the main draft in Section \ref{sec:mle}, b) gradient of $\log p_{obs}(y_i \mid \Theta(X_i))$ for $i \in B$ which is the main difference in this case. However, it can be obtained using Fisher's identity. For ease of notation, fix $i \in B$ and remove the dependence of $\Theta$ on $X_i$. By Fisher's identity we have
\begin{align*}
\nabla \log p_{obs}(y_D \mid \Theta)& = \E_{p(y_{D^c}\mid y_D, \Theta)}[\nabla \log p(y_D, y_{D^c}\mid \Theta)] \\
& = \E_{p(y_{D^c}\mid y_D, \Theta)} [\nabla \log q_\Theta(y_D, y_{D^c})] - \nabla \log Z(\Theta).
\end{align*}
The second term in the above display can be estimated as detailed in Section \ref{sec:mle}. For the first term, a Monte Carlo estimate can similarly be formed by sampling $y_{D^c}\mid y_D, \Theta$ which requires running a Gibbs sampler over each individual missing coordinate. 

Next, consider a simpler scenario where we have some missing observations in test set but no missing data in training set. This means if $(X_{tr}, Y_{tr})$ is the training set and $(X_{t}, Y_t)$ is the test set, then missingness occurs only in $Y_t$. Then, we can obtain the regression parameters by fitting the covariate-dependent CR--MRF on $(X_{tr}, Y_{tr})$ and on the test data, we can impute the missing values by sampling from  $Y_{D^c, t}\mid Y_{D, t}, \hat{\Theta}(X_{t})$ where $D^c$ again represents the set of missing coordinates in the test data. In Table \ref{tab:missing_data_simulations} we show results from a simulated experiment where $(p, q, M) = (5, 2, 3)$. Data of sizes $n = 550, 1050$ were generated from CR--MRF$(\Theta_0(X))$ and the MLE of model parameters were obtained on a training set of size $(500, 1000)$. In the training set, there were no missing data. However, in the test set (of size $50$ in each case), we considered a missing-at-random scenario where for each $Y$ in the test set a randomly selected subset $D^c$ of $\{1, \ldots, p\}$ ($D$ is the set of observed values) were considered missing. We varied the cardinality of $D^c$ from 1 to 3 and computed the Mean Square Error of imputations after obtaining the MLE of $\Theta_0$ and also conditioning $Y_{D}$. This is computed as
$$\mathrm{MSE} = \frac{1}{50 |D^c|} \sum_{i=1}^{50} \sum_{s \in D_i^c} (Y_{is} - \hat{Y}_{is})^2,$$
where $\hat{Y}_{is}, \, s \in D^c$ is the mean imputed value computed by obtaining 1000 Gibbs samples of $Y_{is}\mid Y_{is'}, \hat{\Theta}(X_i), \, s \in D^c, s' \in D, i = 1, \ldots, 50$. Predictions were also computed using an independent CR--MRF model where because of the independence, predictions in the test set are only conditioned on covariate information. Furthermore, we add out-of-sample pairwise preference comparisons obtained using the covariate dependent BTD model and weak PL model. This is measured in terms of Brier score (lower is better). In all cases, and under both metrics full CR--MRF performs better than other alternatives considered. Additionally, as expected, for larger training size we get smaller MSE/Brier score. MSE/Brier score also increases with the cardinality of $D^c$. 

\begin{table}[]
    \centering
    \scalebox{0.8}{
    \begin{tabular}{ccccccc}
    \toprule
         &  & \multicolumn{2}{c}{MSE} & \multicolumn{3}{c}{Brier score}  \\
         \toprule
         & Size of $D^c$ & Full CR--MRF & Ind. CR--MRF & Full CR--MRF & BTD & Weak PL  \\
         \toprule
        \multirow{3}{*}{$n = 500$} &  $|D^c|$ = 1 & {\bf 0.92} & 1.28 & {\bf 0.57} & 0.61 & 0.59\\
         & $|D^c|$ = 2 & {\bf 0.96} & 1.55 & {\bf 0.62} & 0.66 & 0.64\\
         & $|D^c|$ = 3 &  {\bf 1.04} & 1.92 & {\bf 0.72} & 0.78 & 0.75\\
         \midrule 
         \multirow{3}{*}{$n = 1000$} &  $|D^c|$ = 1 & {\bf 0.75} & 1.13 & {\bf 0.52} & 0.57 & 0.58\\
         & $|D^c|$ = 2 &  {\bf 0.82} & 1.49 & {\bf 0.60} & 0.63 & 0.62\\
         & $|D^c|$ = 3 & {\bf 0.86} & 1.71 & {\bf 0.68} & 0.75 & 0.73\\
         \bottomrule \\
         
    \end{tabular}
    }
    \caption{Mean square error (MSE) in predicting missing values in test data where for each test data the covariate information and a subset of the ordinal variables are observed and the missing values are predicted conditional on these. Brier scores for out-of-sample pairwise preference predictions are also reported. }
    \label{tab:missing_data_simulations}
\end{table}
\section{Additional simulation results}
In this section, we report experimental results comparing the maximum likelihood estimate against the pseudo-likelihood estimate for the CR--MRF model. The pseudo-likelihood estimate proceeds by recognizing that each node-conditional distribution is an adjacent-category logistic regression. The method is implemented using the \texttt{VGAM} \citep{JSSv032i10} package. We consider both the homogeneous ($M = 4$) and covariate-dependent CR--MRF ($M = 2$) models. We choose $p \in \{2, 3, 5\}$, each upper-triangular entry of $\Theta_0^{\mathrm{edge}}, \Theta_0^{\mathrm{node}}$ is independently set to $-2$ with probability $0.5$ and to $0$ otherwise, with the lower triangle filled by symmetry. 

For each $p$, a single realization of $(\Theta^{\mathrm{node}}, \Theta^{\mathrm{edge}})$ is held fixed across replicates, and we simulate $10$ datasets via Gibbs sampling. For the covariate-dependent cases, we retain the above setup and generate the two covariates independently from $\mathrm{N}(0, 1)$, with the nonzero edge slopes fixed at $-1$. For this set of experiments, we did not add the $\ell_1$-penalty.

The  MLE is fit using the construction in Section~\ref{sec:mle}, with $N = 2{,}000$ importance samples per iteration. The results are summarized in Table \ref{tab:ord-lowdim-loss} where we report the node and edge specific Frobenius losses for MLE and pseudo-likelihood. These are defined as $\norm{\hat{\Theta}^{\mathrm{node}} - \Theta^{\mathrm{node}}_0}_F/pM$ and $\norm{\hat{\Theta}^{\mathrm{edge}} - \Theta^{\mathrm{edge}}_0}_F/p^2$.
While in the homogeneous case both the maximum likelihood and the pseudo-likelihood estimates perform equally well, we see clear improvements in the covariate-dependent case for the maximum likelihood estimate; in fact the gap widens as the dimension increases. 

We also benchmarked the MLE against the pseudo-likelihood estimator in high-dimensional setups. Due to unavailability of software implementing a pseudo-likelihood estimate with covariates, here we only consider the homogeneous case. We choose $p \in \{20, 30, 40, 50\}$, each upper-triangular entry of $\Theta_0^{\mathrm{node}}$ and $\Theta_0^{\mathrm{edge}}$ is set to $-2$ with probability $0.05$ and to $0$ otherwise, yielding sparse graphs across a range of sizes. We fixed to $M = 4$.
We report results over 10 independent replicates. The MLE was obtained using the proximal gradient updates \eqref{eq:pmle_updates}, with the regularization parameter $\lambda$ selected by five-fold cross-validation. For these high-dimensional experiments, we used $N = 5{,}000$ importance samples to form the estimate $T(\Theta)$ of $\nabla \log Z(\Theta)$. On the other hand, the pseudo-likelihood estimate was obtained using the \texttt{ordinalNet} package from \cite{JSSv099i06}. Along with the Frobenius losses defined earlier, we also report the model selection results. In particular, we focus on the model selection over $\Theta_0^{\mathrm{edge}}$ and report the precision, recall and Matthews correlation coefficient (MCC). For model selection with the $\ell_1$-penalized MLE we use stability selection \citep{meinshausen2010stability} to combine different estimates from different penalty parameter $\lambda$.

The results of the high-dimensional experiments are summarized in Table \ref{tab:ord-homog-highdim}. Here, we see significant improvements in overall performance for the $\ell_1$-penalized MLE over the penalized pseudo-likelihood estimate. For instance, in terms of Frobenius loss, penalized MLE beats penalized pseudo-likelihood in all dimensions. Moreover, penalized MLE also performs better in terms of model selection as indicated by the higher MCC values. 

\begin{table}[!ht]
\centering\small
\setlength{\tabcolsep}{3.5pt}
\caption{Normalized Frobenius losses over 10 replicates with $n=500$. Left: homogeneous CR--MRF ($M=4$). Right: covariate-dependent CR--MRF ($M=2$, $q=2$). Lower errors are boldened.}
\label{tab:ord-lowdim-loss}
\scalebox{0.9}{
\begin{tabular}{c l ccc ccc}
\hline
 & & \multicolumn{3}{c}{Homogeneous ($M=4$)} & \multicolumn{3}{c}{Covariate ($M=2$)} \\
\cmidrule(lr){3-5}\cmidrule(lr){6-8}
$p$ & Method & Node & Edge & Total & Node & Edge & Total \\
\hline
$2$ & MLE  & 0.09 & 0.03 & 0.04 & {\bf 0.39} & {\bf 0.18} & {\bf 0.28} \\
$2$ & VGAM   & 0.09 & 0.03 & 0.04 & 0.40 & 0.21 & 0.29 \\
$3$ & MLE  & 0.16 & 0.04 & 0.05 & {\bf 0.61} & {\bf 0.81} & {\bf 0.72} \\
$3$ & VGAM  & 0.16 & 0.04 & 0.05 & 0.69 & 1.21 & 0.96 \\
$5$ & MLE  & 0.18 & 0.19 & 0.21 & {\bf 0.73} & {\bf 0.88} & {\bf 0.84} \\
$5$ & VGAM   & 0.18 & 0.19 & 0.21 & 0.93 & 1.35 & 1.21 \\
\hline
\end{tabular}
}
\end{table}

\begin{table}[!ht]
\centering\small
\setlength{\tabcolsep}{3.5pt}
\caption{High-dimensional homogeneous CR--MRF: normalized Frobenius losses and edge-support recovery under stability selection. Means over 10 replicates, $n=500$, $M=4$.}
\label{tab:ord-homog-highdim}
\begin{tabular}{c l ccc ccc}
\hline
 & & \multicolumn{3}{c}{Loss} & \multicolumn{3}{c}{Support recovery} \\
\cmidrule(lr){3-5}\cmidrule(lr){6-8}
$p$ & Method & Node & Edge & Total & Prec. & Rec. & MCC \\
\hline
$20$ & MLE        & {\bf 0.57} & 0.30 & {\bf 0.33} & {\bf 0.76} & 0.77 & {\bf 0.74} \\
$20$ & ordinalNet & 1.90 & {\bf 0.29} & 0.50 & 0.40 & {\bf 0.97} & 0.58 \\
$30$ & MLE        & {\bf 0.71} & 0.39 & {\bf 0.43} & {\bf 0.84} & 0.51 & {\bf 0.63} \\
$30$ & ordinalNet & 2.62 & {\bf 0.38} & 0.61 & 0.54 & {\bf 0.83} & 0.63 \\
$40$ & MLE        & {\bf 0.74} & 0.36 & {\bf 0.40} & {\bf 0.66} & 0.73 & {\bf 0.67} \\
$40$ & ordinalNet & 2.26 & 0.40 & 0.61 & 0.33 & {\bf 0.91} & 0.51 \\
$50$ & MLE        & {\bf 0.80} & 0.40 & {\bf 0.43} & {\bf 0.81} & 0.61 & {\bf 0.69} \\
$50$ & ordinalNet & 2.60 & {\bf 0.37} & 0.54 & 0.44 & {\bf 0.87} & 0.59 \\
\hline
\end{tabular}
\end{table}
\newpage
\section{Additional Data Application Results}\label{sec:movielens_additional_results}
\subsection{CR--MRF regression}\label{sec:suppregression}
Here, we report results for the ordinal regression using the covariate-dependent CR--MRF model on the \texttt{MovieLens} data.
To keep the number of parameters manageable relative to $n$, we impose level-invariant node slopes, $\theta_{j,l}(x_i)=\alpha_{j,l}+\beta_j^\top x_i$, while the edge parameters follow \eqref{eq:cov-coef-node}; the model then has $30$ node and $30$ edge parameters. We fit it on $170$ reviewers and hold out the remaining $64$, using the unpenalised MLE of Section~\ref{sec:mle} with $N=5{,}000$ importance samples per iteration. We also fit the node-conditional maximum pseudo-likelihood estimator (MPLE) with the same parametrization: for each movie, an adjacent-category regression on the covariates, the other ratings $M-y_k$ and their interactions with the covariates, fitted without penalty by \texttt{ordinalNet} \citep{JSSv099i06}. 

For interpretation, recall that
\begin{align*}
\log\frac{\P(Y_j=m-1\mid y_{-j},x)}{\P(Y_j=m\mid y_{-j},x)}
&=\alpha_{j,m}+\beta_j^\top x+\sum_{k\neq j}\theta_{jk}(x)\,\frac{M-y_k}{M^2},
\end{align*}
so a positive $\beta_{j,v}$ multiplies every adjacent-level odds ratio by
$e^{\beta_{j,v}}>1$ and shifts movie $j$ towards lower ratings, and a negative
$\theta_{jk}(x)$ indicates a negative conditional association between movies $j$ and $k$. Table~\ref{tab:ml-node} reports the node slopes. The fitted effects track intuition.  Holding all other $y_k$ fixed at $M$, being female multiplies each adjacent-level odds ratio by $e^{-0.58}\approx0.56$ for \emph{When Harry Met Sally}, a shift towards higher ratings, but by $e^{0.66}\approx1.93$ for \emph{Airplane!}, a shift towards lower ratings. Age separates the titles differently. Older reviewers rate \emph{Desperately Seeking Susan} higher (adjacent-level odds ratio $0.70$ per standard deviation of age) and the two teen comedies, \emph{Ferris Bueller's Day Off} and \emph{The Breakfast Club}, lower (both $1.25$), while age has little effect on \emph{When Harry Met Sally} and \emph{Airplane!}.

\begin{table}[!ht]
\centering\small
\setlength{\tabcolsep}{4pt}
\caption{Estimated node slopes $\beta_j$ for MovieLens 1M under the joint MLE and the node-wise MPLE. Age is standardised.}
\label{tab:ml-node}
\scalebox{0.9}{
\begin{tabular}{l rr rr}
\toprule
 & \multicolumn{2}{c}{Female} & \multicolumn{2}{c}{Age} \\
\cmidrule(lr){2-3}\cmidrule(lr){4-5}
Movie & MLE & MPLE & MLE & MPLE \\
\midrule
When Harry Met Sally\ldots  & $-0.58$ & $-0.61$ & $0.06$  & $-0.36$ \\
Airplane!                   & $0.66$  & $0.69$  & $0.04$  & $-0.03$ \\
Desperately Seeking Susan   & $-0.27$ & $-0.55$ & $-0.36$ & $-0.58$ \\
Ferris Bueller's Day Off    & $-0.03$ & $-1.24$ & $0.22$  & $0.31$  \\
The Breakfast Club          & $-0.29$ & $-0.73$ & $0.22$  & $0.13$  \\
\bottomrule
\end{tabular}
}
\end{table}

The two estimators agree in sign on every gender effect and on the three clearest age effects, and they nearly coincide for the two strongest gender effects, \emph{When Harry Met Sally} and \emph{Airplane!}. Elsewhere the MPLE slopes run larger. The only two sign disagreements, the age effects for \emph{When Harry Met Sally} and \emph{Airplane!}, fall where the MLE slope is near zero. In Figures \ref{fig:edges1}, \ref{fig:edges2} we show the estimated network connections under the two estimators MLE and MPLE conditionally for each covariate subclass (gender by age-group).

\begin{figure}[htbp]
    \centering
\includegraphics[width=0.95\textwidth, height = 18cm]{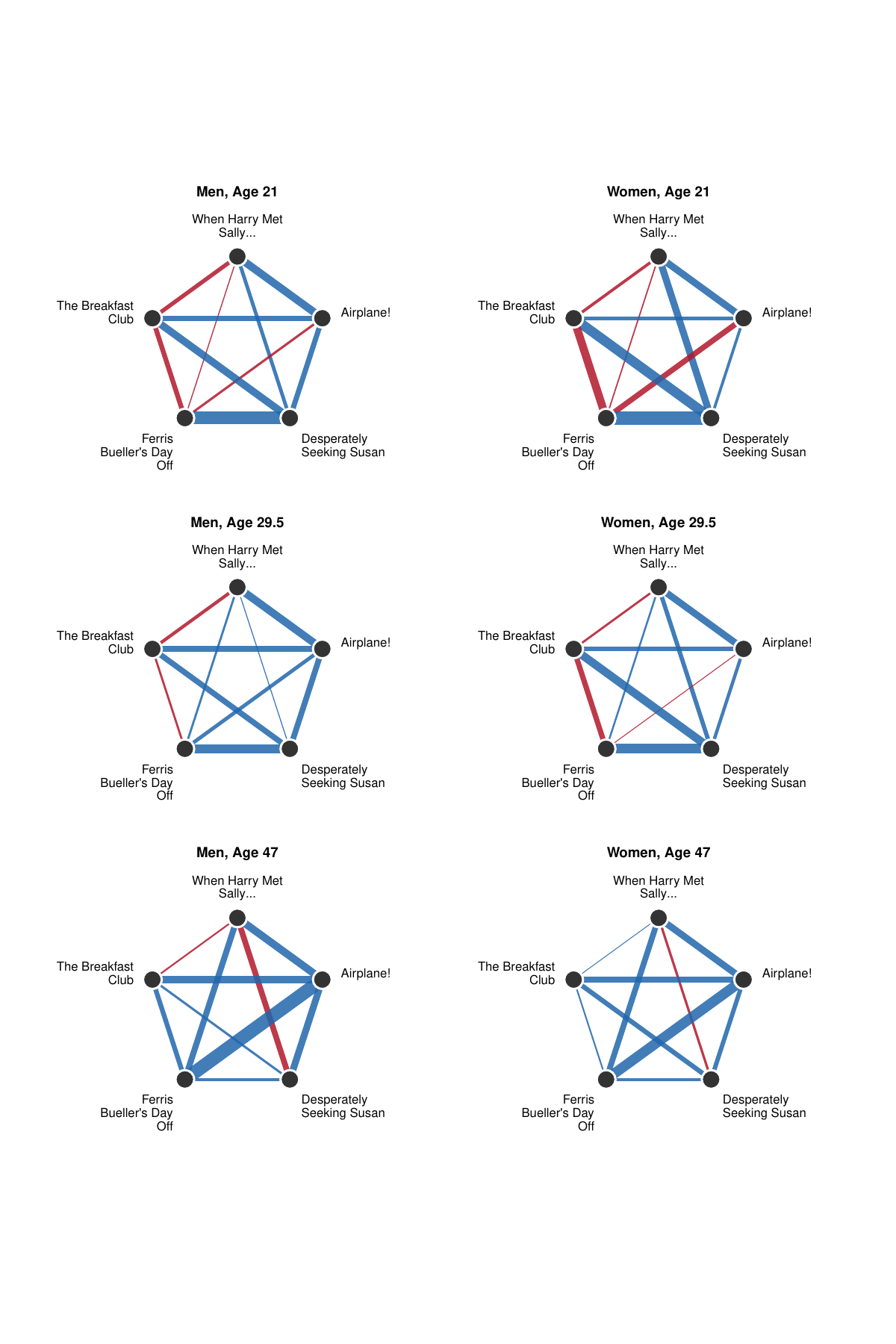}
\vspace{-50pt}
    \caption{Fitted edge network $\theta_{jk}(x)$ under the joint MLE, for men and women at ages 21, 29.5 and 47. Red edges are positive and blue edges negative.
    Line width is proportional to $|\theta_{jk}(x)|$ on a scale shared by all six
    panels.}
    \label{fig:edges1}
\end{figure}

\begin{figure}[htbp]
    \centering
\includegraphics[width=0.95\textwidth, height = 18cm]{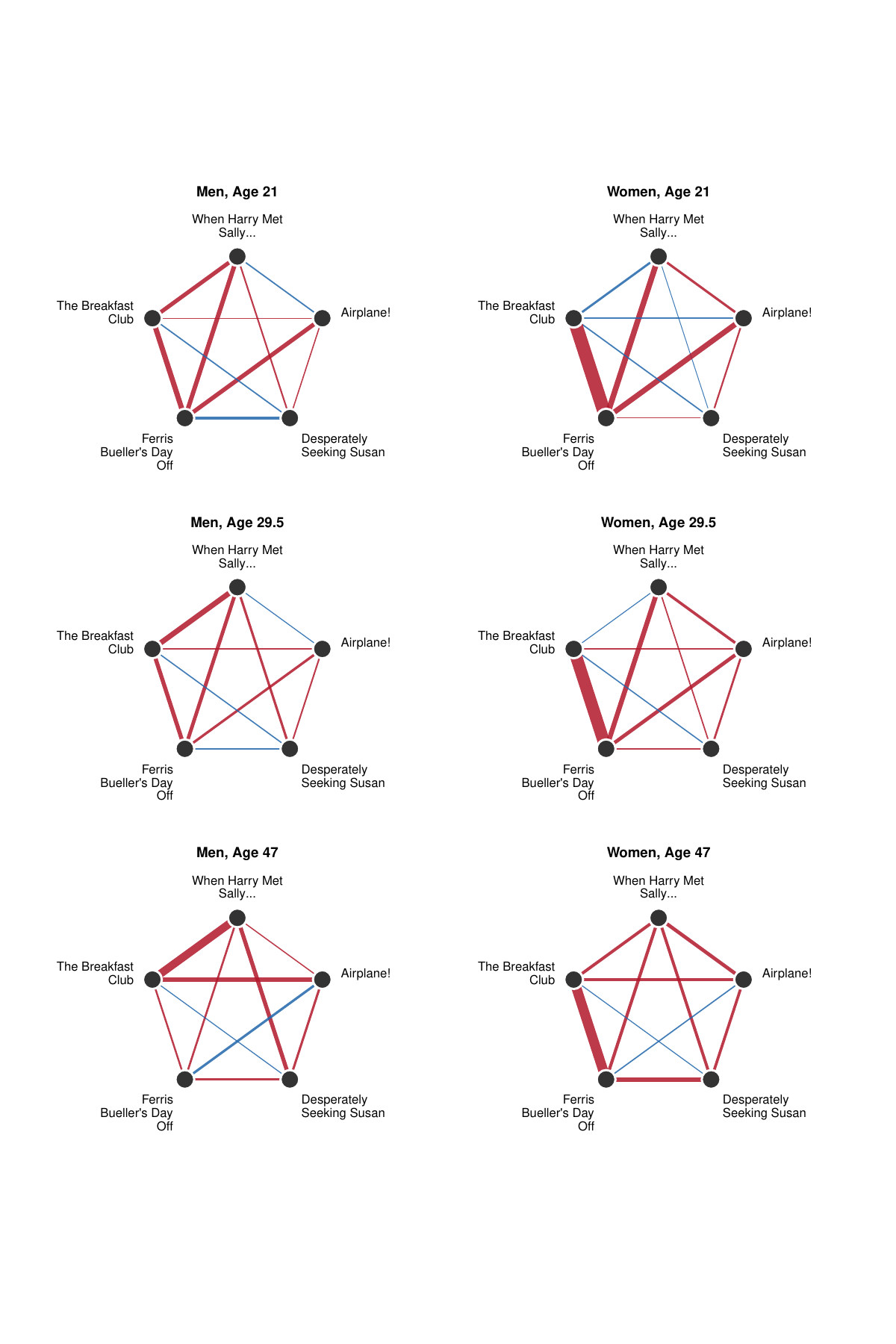}
\vspace{-50pt}
    \caption{Fitted edge network $\theta_{jk}(x)$ under the joint MPLE, for men and women at ages 21, 29.5 and 47. Red edges are positive and blue edges negative.
    Line width is proportional to $|\theta_{jk}(x)|$ on a scale shared by all six
    panels.}
    \label{fig:edges2}
\end{figure}

\subsection{Comparison with preference models}\label{sec:suppcomp}
In the main draft, we showed out-of-sample preference comparison between the covariate-dependent CR--MRF model and a covariate-dependent strict PL model. The ties in that case were resolved by stable sort. To study whether specific tie resolution method has any effect on the result,  here we include results for the strict PL model with ties resolved randomly. For this case, the CR--MRF induced preference probabilities are overall better again (total error for CR--MRF is 0.65 and for strict PL 0.72). Since CR--MRF directly obaserves the ordinal scores, the results for CR--MRF in Figure \ref{fig:movielens_summary1} and \ref{fig:movielens_summary2} should be the same in principal. The difference observed is due to Monte Carlo variation in sampling the out-of-sample $Y$'s given the MLE and the corresponding covariates.

\begin{figure}[htbp]
    \centering
\includegraphics[width=0.95\textwidth, height = 10 cm]{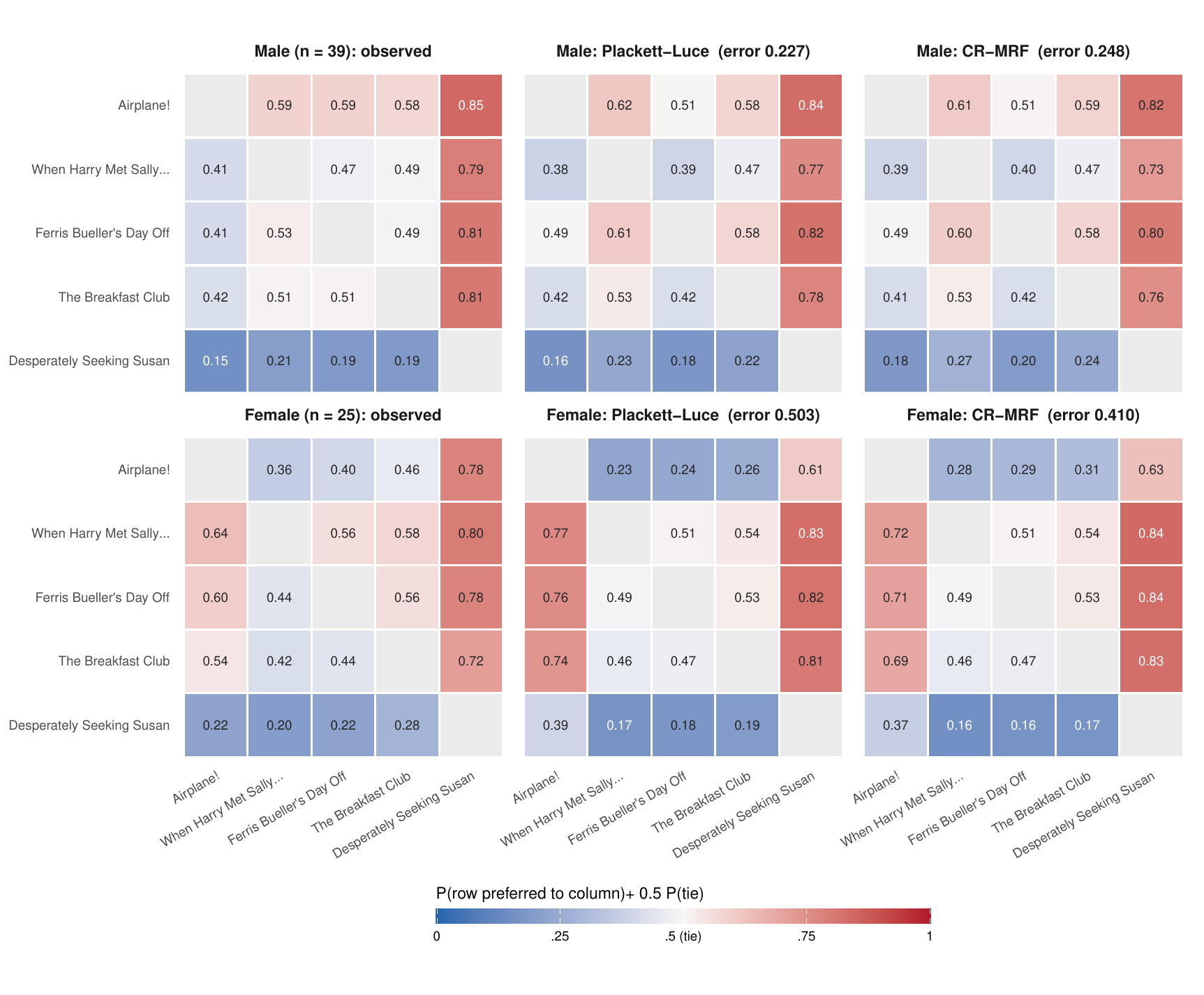}
    \caption{Observed and model-implied pairwise preference probabilities on the 64 held-out MovieLens reviewers, split by reviewer gender, with ordinal ratings ($M=4$) and ties resolved randomly for fitting strict PL.}
    \label{fig:movielens_summary2}
\end{figure}

Finally, we include results where we recode the rankings to a binary scale. In particular, any movie rated higher than 3.5 is coded as 1 and ratings below 3.5 are coded as 0. This allows us to fit a covariate-dependent BTD model to the derived comparisons. Again, we are interested in out-of-sample preference prediction. The results are summarized in Figure \ref{fig:movielens_summary3}. As suggested by Proposition \ref{prop:CR_MRF-BT}, for $M = 1$, the two models behave very similarly which is what we see in Figure \ref{fig:movielens_summary3}.

\begin{figure}[htbp]
    \centering
\includegraphics[width=0.95\textwidth, height = 10cm]{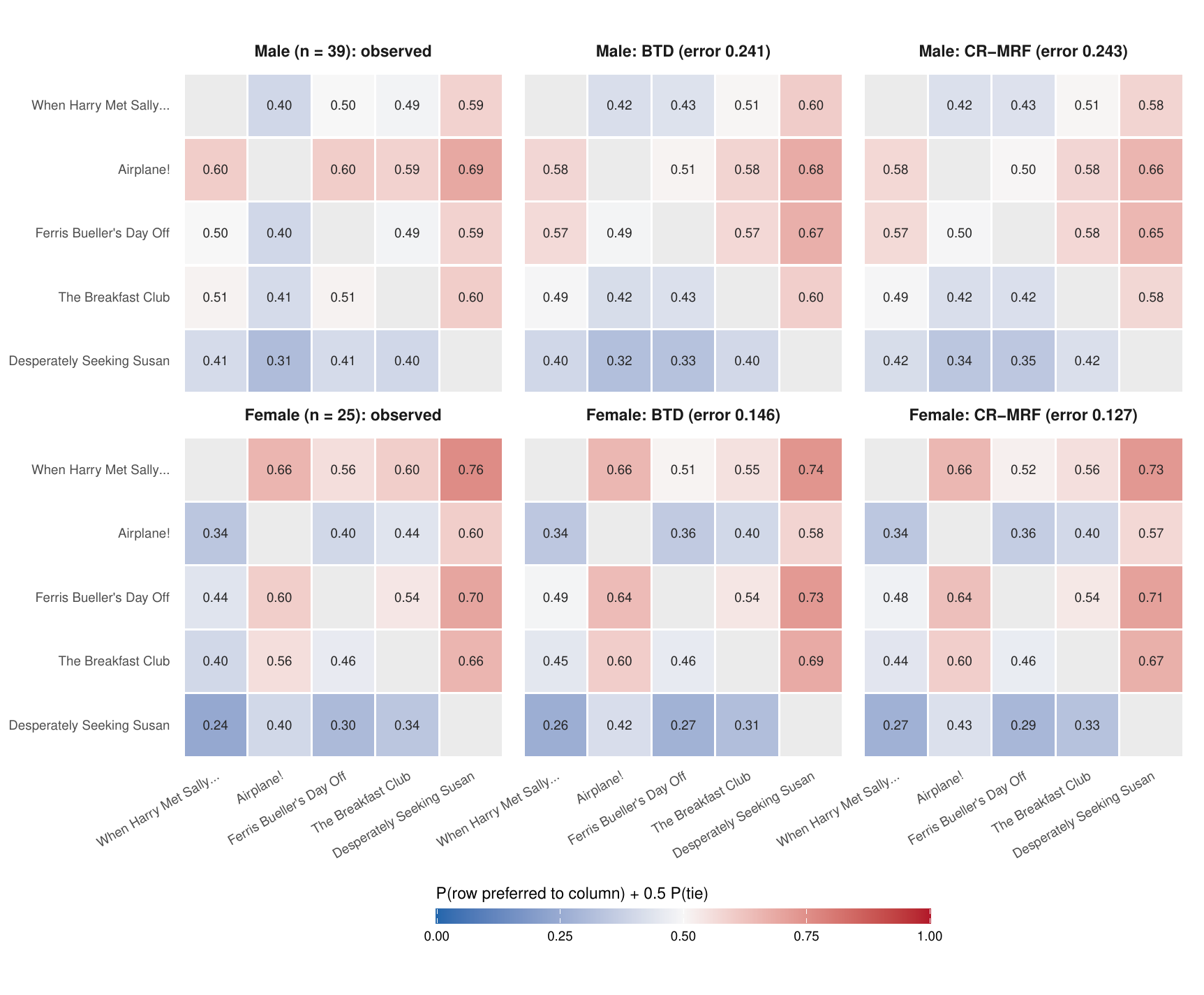}
    \caption{Observed and model-implied pairwise preference probabilities on the 64 held-out MovieLens reviewers, split by reviewer gender, with binary ratings ($M=1$).}
    \label{fig:movielens_summary3}
\end{figure}

\end{document}